\newtheorem{theorem}{Theorem}
\newtheorem{lemma}{Lemma}
\theoremstyle{definition}
\newtheorem{definition}{Definition}
\newtheorem{remark}{Remark}
\newtheorem{problem}{Problem}
\title{iHERO: Interactive Human-oriented Exploration and Supervision Under Scarce Communication}
  \author{Zhuoli Tian, Yuyang Zhang, Jinsheng Wei and Meng Guo$^*$
  \\College of Engineering, Peking University
\thanks{$^*$Corresponding author: Meng Guo, {\tt\small meng.guo@pku.edu.cn}.}
}
\begin{document}
\maketitle
\thispagestyle{empty}
\pagestyle{empty}


\begin{abstract}
  Exploration of unknown scenes before human entry is essential
  for safety and efficiency in numerous scenarios,
  e.g., subterranean exploration, reconnaissance, search and rescue missions.
  Fleets of autonomous robots are particularly suitable for this task,
  via concurrent exploration, multi-sensory perception and autonomous navigation.
  Communication however among the robots can be severely restricted
  to only close-range exchange via ad-hoc networks.
  Although some recent works have addressed the problem of
  collaborative exploration under restricted communication,
  the crucial role of the human operator has been mostly neglected.
  Indeed, the operator may:
  (i) require timely update  regarding the exploration progress and fleet status;
  (ii) prioritize certain regions; and
  (iii) dynamically move within the explored area;
  To facilitate these requests, this work proposes an interactive
  human-oriented online coordination framework
  for collaborative exploration and supervision under scarce communication (iHERO).
  The robots switch smoothly and optimally among
  fast exploration, intermittent exchange of map and sensory data,
  and return to the operator for status update.
  It is ensured that these requests are fulfilled online
  interactively with a pre-specified latency.
  Extensive large-scale human-in-the-loop simulations and hardware experiments are performed
  over numerous challenging scenes,
  which signify its performance such as explored area and efficiency,
  and validate its potential applicability to real-world scenarios. 
  The videos are available on \url{https://zl-tian.github.io/iHERO/}.
\end{abstract}

\section{Introduction}\label{sec:intro}
Exploration of unknown and hazardous scenes before allowing humans inside
is crucial for their safety, which can also improve
the performance of subsequent tasks with an accurate model.
Fleets of UAVs and UGVs have been deployed as {extended eyes and arms},
e.g., to explore planetary caves in~\citet{petravcek2021large,klaesson2020planning};
search and rescue after earthquakes in~\citet{couceiro2017overview}.
Many collaborative exploration strategies have been
proposed along with the advances in autonomous navigation and perception,
e.g.,~\citet{yamauchi1997frontier,hussein2014multi,colares2016next,
  zhou2023racer,patil2023graph,guo2015multi}.
However, they often assume an all-to-all communication
among the robots,
where any map observed locally by one robot
is immediately available to other robots.
This could be impractical in many aforementioned scenes
where the communication facilities are unavailable or severely degraded.
In such cases, the robots can only exchange information via ad-hoc networks in close proximity.
This imposes great challenges on the fleet coordination as communication events and exploration tasks
are now closely dependent,
thus should be planned simultaneously.

\begin{figure}[t]
  \centering
  \includegraphics[width=0.90\linewidth]{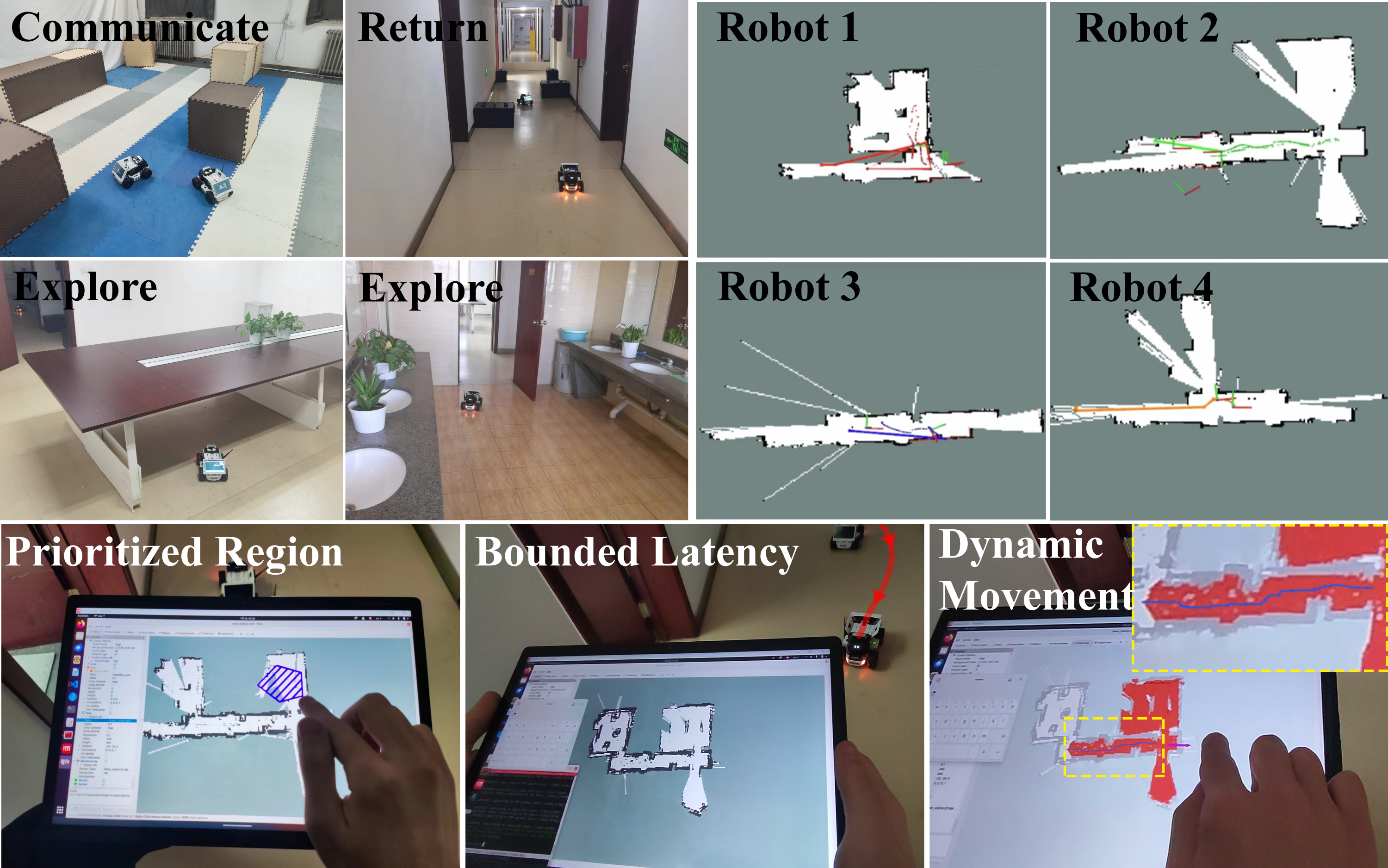}
  \vspace{-2mm}
  \caption{Illustration of the considered scenario.
    \textbf{Top-left}: $4$ UGVs switch among exploration, intermittent communication
    and return to the operator;
    \textbf{Top-right}: Different local maps at each robot,
    which are updated via pairwise communication;
    \textbf{Bottom}: three types of human requests:
    (i) Prioritized regions (in blue) to explore (\textbf{Left});
    (ii) Status update with bounded latency via return events (\textbf{Middle});
    (iii) Dynamic movement of the operator within the allowed region (in red) (\textbf{Right}).
  }
  \label{fig:overall}
  \vspace{-4mm}
\end{figure}

Furthermore, the role of the human operator and
the online interaction with the fleet during exploration
are {less} studied and mostly replaced by a static base for visualization.
However, under scarce communication, the operator might be completely
\emph{unaware} of the current progress of exploration
if none of the robots return to the operator
to relay such information.
Consequently, the operator can not supervise the exploration task,
e.g., checking the system status such as battery and liveness
of each robot;
or dynamically adjusting the priorities of the areas to be explored,
as shown in Fig.~\ref{fig:overall}.
Lastly, it is common that the operator might request to move
dynamically inside the explored area,
in which case to maintain the interactions during movement
is quite challenging without a global communication.

\begin{figure*}[t]
  \centering
  \includegraphics[width=0.85\linewidth]{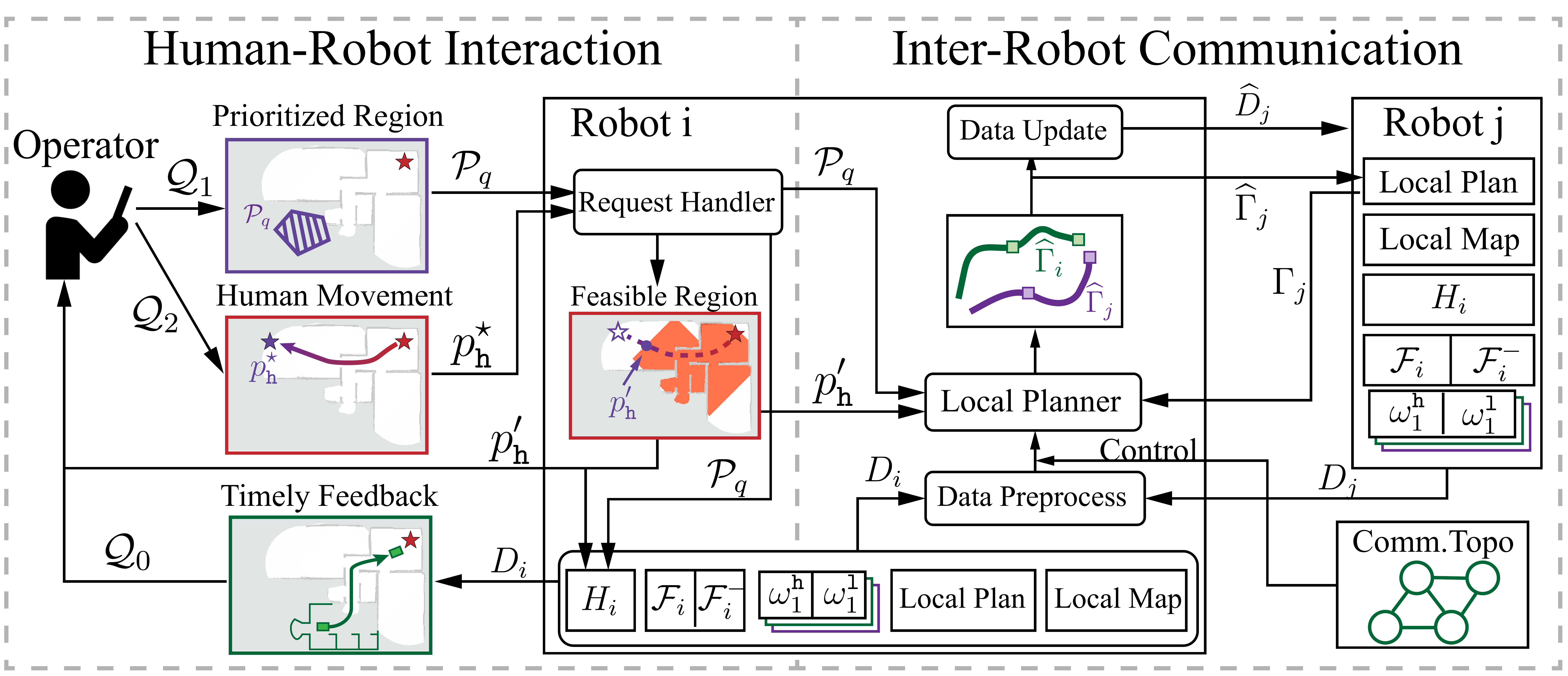}
  \caption{Illustration of the proposed framework (iHERO),
  which consists of the inter-robot and robot-operator communication
  protocol, the collaborative exploration strategy, and more importantly,
  the online adaptation module to operator requests.}
  \label{fig:framework}
  \vspace{-4mm}
\end{figure*}

\subsection{Related Work}\label{subsec:intro-related}
The problem of collaborative exploration has a long history in robotics.
The frontier-based method from~\citet{yamauchi1997frontier}
introduces an intuitive yet powerful metric for guiding the exploration.
Originally proposed for a single robot, it has been adapted to multi-robot teams
by assigning these frontiers to different robots for concurrent
exploration in different ways,
e.g., greedily to the nearest robot by distance in~\citet{yamauchi1999decentralized},
distributed and sequential auction in~\citet{hussein2014multi},
or the multi-vehicle routing for the optimal sequences in~\citet{zhou2023racer},
or based on the expected information gain in~\citet{colares2016next, patil2023graph, burgard2005coordinated}.
However, all the aforementioned work assume that all robots can
exchange information via wireless communication instantly at all time.
Thus, all robots always have access to \emph{the same global map} and the same set of frontiers.
This is often impractical or infeasible without a wireless network already installed,
especially for subterranean or indoor structures,
where the inter-robot communication is poor and limited in range.
Consequently, these approaches are not suitable anymore for communication-constrained unknown scenes.

To overcome this challenge of limited communication,
many recent work can be found that combines the planning
of inter-robot communication and autonomous exploration.
The work in~\citet{klaesson2020planning} reports the systematic solution
in the DARPA subterranean challenge where an integer linear program (ILP)
is formulated over the mobility-communication network.
Different intermittent communication strategies have been proposed
for different performance metrics, e.g.,
the ``Zonal and Snearkernet'' relay algorithm in~\citet{vaquero2018approach},
the four-state strategy in~\citet{cesare2015multi},
the distributed assignment of relay tasks for static goals in~\citet{marchukov2019fast},
and a centralized integer program to optimize rendezvous points in~\citet{gao2022meeting}.
On the other hand, fully-connected networks at all time are enforced in~\citet{rooker2007multi}
and further in~\citet{pei2010coordinated} with a lower-bounded bandwidth.
Droppable radios are utilized in~\citet{saboia2022achord} as
extended communication relays between robots and a static base station.
Their common objective is to maximize the exploration efficiency,
however without considering different online interactions with the operator.

Indeed, the human operator plays \emph{an indispensable role}
for the operation of robotic fleets, despite their autonomy.
In many aforementioned scenarios,
the operator should not only be aware of their operation status
online, but also directly supervise certain procedures whenever necessary,
e.g., prioritized regions to explore, or desired new position of the operator.
Almost all related work neglect this aspect and assume a static base station,
see, e.g.,~\citet{klaesson2020planning, vaquero2018approach,
  marchukov2019fast, gao2022meeting, pei2010coordinated, saboia2022achord}.
This interplay brings numerous challenges to be addressed, i.e.,
how to ensure a timely exchange of information between the operator and the fleet;
how to accommodate different requests;
and how to cope with the dynamic movement of the operator.

\subsection{Our Method}\label{subsec:intro-our}
This work proposes an interactive human-oriented online planning framework
for the collaborative exploration of unknown scenes (iHERO),
via a fleet of mobile robots under scarce communication.
As illustrated in Fig.~\ref{fig:framework},
explicit interactive requests from the operator are formulated
including timely status update, prioritized region during exploration,
and dynamic movements.
The proposed solution is a fully distributed coordination strategy for
both collaborative exploration and intermittent communication.
It guarantees an upper bound on the maximum latency of the
information gathered by any robot to the operator at all time.
Moreover, it is fully adaptive to the online requests
by adjusting the exploration and communication events accordingly,
subject to local communication and tight latency constraints.
Lastly, it provides real-time feedback to the operator regarding the allowed region of movement.
Extensive human-in-the-loop simulations and hardware experiments
are conducted for robotic fleets within office scenes and subterranean caves.

Main contribution of this work is threefold:
(i) the novel problem formulation of human-oriented collaborative exploration
and supervision under scarce communication, supporting three types of requests.
To the best of our knowledge, this problem \emph{has not been addressed} in related work;
(ii) the distributed coordination strategy for
both fast exploration
and inter-robot-operator communication,
while ensuring a timely response to online requests;
(iii) the numerical validation from extensive simulation and hardware experiments
for its applicability to real-world scenarios.

\section{Problem Description}\label{sec:problem}

\subsection{Workspace and Robots}\label{subsec:ws}
Consider a 2D workspace~$\mathcal{A}\subset \mathbb{R}^2$,
of which its \emph{map} including the boundary, freespace
and obstacles are all unknown.
To obtain a complete map of~$\mathcal{A}$,
a human operator deploys a team of robots
such as UAVs and UGVs around the location~$p_0\in \mathcal{A}$,
denoted by~$\mathcal{N}\triangleq\{1,\cdots,N\}$.
Each robot~$i\in \mathcal{N}$ is equipped with various sensors
such as~IMU, Lidar and RGBD cameras; and actuators to move around the workspace.
Moreover, each robot is assumed to be capable of simultaneous localization and mapping (SLAM)
with collision avoidance,
i.e.,
\begin{equation}\label{eq:slam}
  (M^+_i,\, p^+_i,\, \mathbf{p}_{g}) \triangleq \texttt{SLAM}(p_i,\, p_g,\, M_i,\,\mathcal{A}),
\end{equation}
where~$\texttt{SLAM}(\cdot)$ is a generic SLAM module;
$M_i(t)\subset \mathcal{A}$ is the \emph{local} map of robot~$i\in \mathcal{N}$ at time~$t\geq 0$;
$p_i(t)\in M_i$ is the 2D pose of robot~$i$;
$p_g \in M_i$ is a suitable goal pose within~$M_i$;
$\mathbf{p}_{g}\subset M_i$ is the collision-free path from~$p_i(t)$ to~$p_g$
within~$M_i$;
$M^+_i(t') \subset \mathcal{A}$ is the updated map
after traversing the path at time~$t'>t$;
and~$p^+_i(t')$ is the updated pose at time~$t'$.

\begin{remark}\label{rm:map-model}
The exact representation of the map depends on the SLAM algorithm,
e.g., the occupancy grid map from~\citet{thrun2002probabilistic},
the octomap from~\citet{hornung2013octomap},
or even the metric-semantic map in~\citet{tian2022kimera}.
\hfill $\blacksquare$
\end{remark}

\begin{figure}[t]
  \centering
  \includegraphics[width=0.95\linewidth]{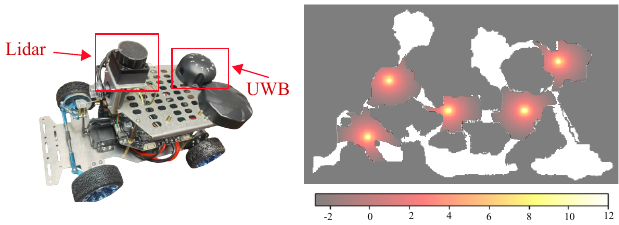}
  \vspace{-2mm}
  \caption{\textbf{Left}: UGV equipped with Lidar and UWB-based
  communication unit;
    \textbf{Right}: the communication quality
    when~$5$ robots are scattered
    within a subterranean cave. }
  \label{fig:uwb}
  \vspace{-4mm}
\end{figure}

\subsection{Inter-robot Communication}\label{subsec:com}
Each robot is equipped with a wireless communication unit
for inter-robot information exchange, e.g., WiFi and ultra-wideband (UWB).
In particular, robot~$i\in \mathcal{N}$ can communicate with robot~$j\in \mathcal{N}$
at time~$t>0$,
if the communication quality (such as SNR) is above a certain threshold, i.e.,
\begin{equation}\label{eq:com}
  (D^+_i, D^+_j) \triangleq \texttt{Com}\big(D_i,D_j\big),
  \,\text{ if }\, \texttt{Qual}(p_i,p_j,\mathcal{A}) > \underline{\delta};
\end{equation}
where~$\texttt{Com}(\cdot)$ is the communication module of robots~$i$ and~$j$;
$D_i(t)\in \mathcal{D}$ is the {local} data stored at robot~$i$ at time~$t\geq 0$,
similar for robot~$j$;
$\mathcal{D}$ is a pre-defined data structure, which may include the map data,
video/audio stream, coordination plans, requests from the human operator
as explained in the sequel;
$\texttt{Qual}(\cdot)$ evaluates the communication quality between robots~$i,j$
given their positions and the workspace layout;
$\underline{\delta}>0$ is a pre-specified lower bound on the quality;
the outputs~$D^+_i(t), D^+_j(t)$ are the updated data after communication.
By default, $(M_i,\, p_i)\in D_i$ holds for all robots.
An illustration of the UWB-based communication is shown in Fig.~\ref{fig:uwb},
which is subject to both range and line-of-sight (LOS) constraints.

\subsection{Human-robot Interaction and Online Requests}\label{subsec:robot-human}
Similarly, the human operator can receive data from any robot
via the same communication module in~\eqref{eq:com}.
Namely, the local data and map stored at the operator are denoted
by~$D_\texttt{h}(t),M_\texttt{h}(t)$, respectively.
More importantly, the operator has the following \textbf{three} types of
{requests} for the fleet:
(i) the operator should obtain the local maps of {all} robots with a latency
smaller than a given threshold~$T_{\texttt{h}}>0$ at all time, i.e.,
\begin{equation}\label{eq:frequency}
\textstyle{\bigcup_i} M_i(t) \subset M_\texttt{h}(t+T_{\texttt{h}}), \; \forall t \geq 0;
\end{equation}
which means that the explored map by any robot at time~$t$ should be known
to the operator latest by time~$t+T_{\texttt{h}}$.
This request is denoted by~$Q_0$, which is activated at all time;
(ii) the locations that should be prioritized during exploration,
denoted by~$Q_1\subset \mathcal{A}$.
It is fulfilled once the area in~$Q_1$ is explored;
(iii) the desired next position of the operator,
denoted by~$Q_2\triangleq p_\texttt{h}^{\star}\in \mathcal{A}$,
which is fulfilled after the operator reaches it.
  As described in the sequel, the~$Q_2$ request might not be feasible
  due to the constraint~\eqref{eq:frequency}.
  Thus, intermediate waypoints that are feasible would be provided for
  the operator to choose as an interactive process.

Thus, the behavior of each robot is fully determined
by its timed sequence of navigation and communication events, i.e.,
\begin{equation}\label{eq:plan}
\Gamma_i\triangleq c^0_i\, \mathbf{p}^0_i\, c^1_i \, \mathbf{p}^1_i\,c^2_i \, \cdots,
\end{equation}
where~$c^k_i$ is a communication event defined by
the communication location, the other robot or the operator,
and the exchanged data as defined in~\eqref{eq:com};
the navigation path~$\mathbf{p}^k_i\subset \mathcal{A}$ is the waypoints
that robot~$i$ has visited between the events;
and~$k>0$ records the number.
For brevity, denote by~$\{\Gamma_i\} \models Q_{0,1,2}$
if the behaviors of the fleet fulfill the requirements of these requests collaboratively.

\subsection{Problem Statement}\label{subsec:problem}
The overall objective is to design the collaborative exploration and communication strategy,
such that the total time for the operator to obtain the complete map is minimized.
Additionally, the online requests from the operator should be accommodated.
Thus, it is formalized as a constrained optimization over the jointed plan of all robots, i.e.,
\begin{subequations} \label{eq:problem}
  \begin{align}
    &\mathop{\mathbf{min}}\limits_{\{\Gamma_i\},\, \overline{T}}\;  \overline{T} \notag\\
    \textbf{s.t.}\quad & \mathcal{A} \subseteq M_\texttt{h}(\overline{T}); \label{subeq:terminal}\\
    &\eqref{eq:slam}-\eqref{eq:com},\; \forall \Gamma_i; \label{subeq:dynamics}\\
    &\{\Gamma_i\} \models Q_\ell,\;\forall Q_\ell\in \mathcal{Q}; \label{subeq:requests}
  \end{align}
\end{subequations}
where~$\overline{T}>0$ is the time when
the complete map~$\mathcal{A}$ is known to the operator in~\eqref{subeq:terminal};
\eqref{subeq:dynamics} restricts the behaviors of the fleet as described;
and~\eqref{subeq:requests} enforces that the set of all requests~$\mathcal{Q}$
from the operator are fulfilled.
  Note that~\eqref{subeq:terminal} is overall goal of exploration
  which is independent of the underlying system, while~\eqref{subeq:dynamics}
  is system-dependent.
  For fully-connected system without communication constraints,
the constraint~\eqref{subeq:terminal} is often replaced by~$\mathcal{A} \subseteq \bigcup_i M_i(\overline{T})$ as the union of all local maps.

\begin{remark}\label{rm:prob}
  The latency~$T_{\texttt{h}}$ in~$Q_0$ could be modified online
  by the operator according to the exploration progress,
  e.g., it can be increased gradually as the fleets move further away from the operator
  and the newly-explored area decreases.
  For simplicity, the subsequent analyses are conducted
  for a static~$T_{\texttt{h}}$ first,
  which is then extended in Sec.~\ref{subsubsec:dynamic-Th}.
\hfill $\blacksquare$
\end{remark}

\section{Proposed Solution}\label{sec:solution}
The solution contains three main components:
(i) the intermittent communication protocol as the fundamental
building block;
(ii) the strategy
for simultaneous exploration and intermittent communication
to fulfill the~$Q_0$ and $Q_1$ requests;
and (iii) the interactive protocol to fulfill the~$Q_2$ request,
where the operator is offered intermediate
waypoints before the goal.

\subsection{Intermittent Communication Protocol}\label{subsec:comm-protocol}
To ensure that the information can be propagated among the team and to the operator,
the robots are required to meet and communicate via the protocol
of {intermittent} communication during exploration~\citet{guo2018multirobot,kantaros2019temporal}.
To begin with, a communication topology is defined
as an undirected graph~$\mathcal{G}\triangleq (\mathcal{N},\mathcal{E})$,
where~$\mathcal{E}\subset \mathcal{N}\times  \mathcal{N}$.
It satisfies two conditions:
(i) $\mathcal{G}$ is connected;
(ii) two robots~$i,j$ can only communicate at time~$t$
if~$(i,j)\in \mathcal{E}$ and $\texttt{Qual}(\cdot)$ in~\eqref{eq:com} holds.
Given a topology~$\mathcal{G}$,
the protocol is as follows:
initially, all robots are in close proximity but
each robot only communicates with its neighbors in~$\mathcal{G}$;
during each communication, two robots
determine locally the ``time and place'' for
their \emph{next} communication event,
and whether one robot needs to return
to the operator {before} they meet (called a \emph{return event});
then they depart and do not communicate until the next communication event.
This procedure repeats indefinitely until termination,
as shown in Fig.~\ref{fig:comm-protocol}.

\begin{figure}[t]
  \centering
  \includegraphics[width=0.85\linewidth]{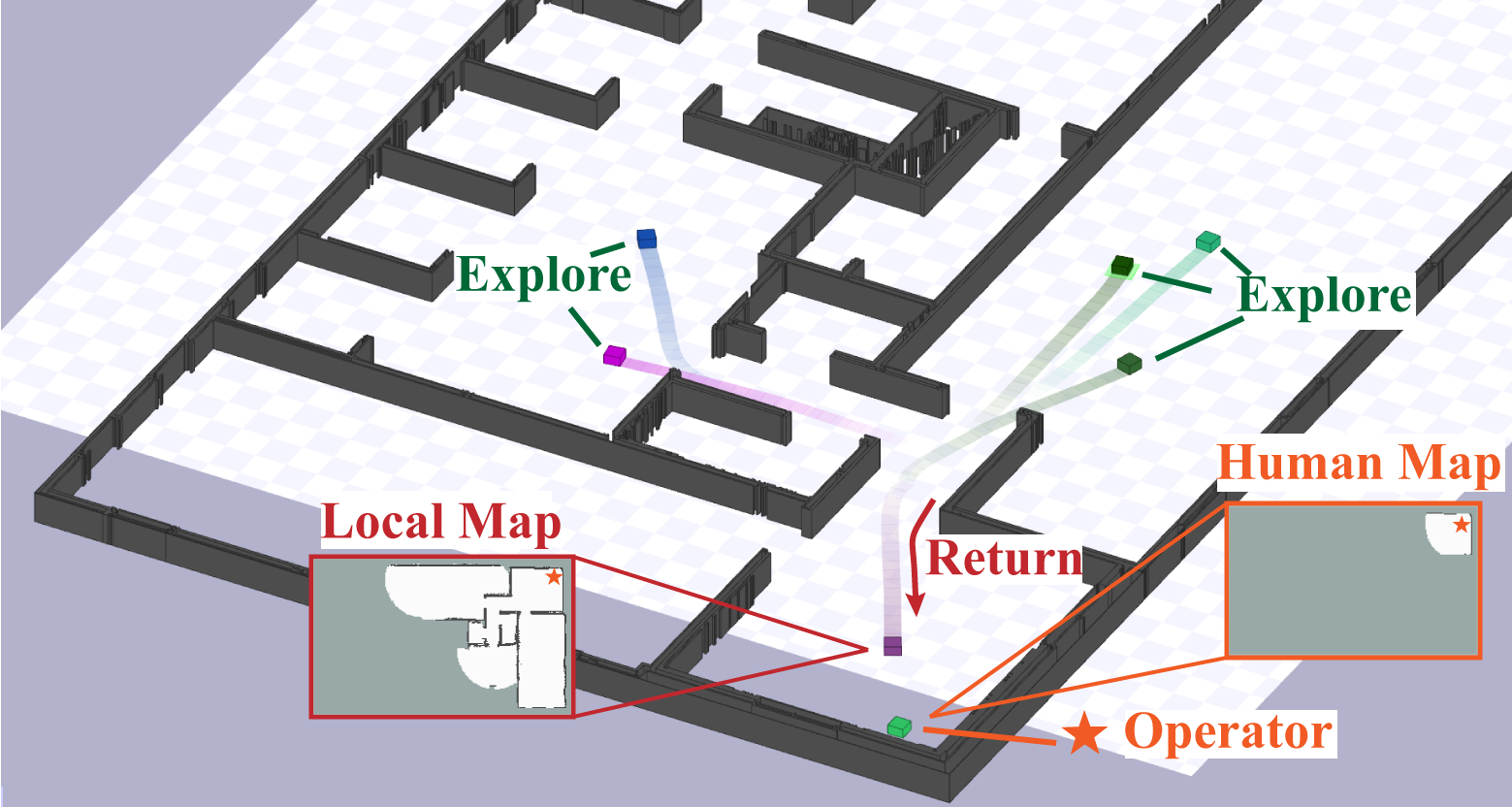}
  \caption{Illustration of the intermittent communication
  protocol, including pair-wise communication and return events.}
  \vspace{-4mm}
  \label{fig:comm-protocol}
\end{figure}

\begin{definition}[Communication Rounds]\label{def:rounds}
Under this protocol, all communication events can be sorted in time as rounds:
\begin{equation}\label{eq:round}
  \mathcal{C}\triangleq C_1C_2\cdots C_r\cdots,
\end{equation}
where~$C_r\triangleq c_{n_1 n'_1}\, c_{n_2 n'_2}\,\cdots c_{n_{K_r} n'_{K_r}}$
is the $r$-th round of communication for~$r\geq 1$;
$c_{n_kn'_k}\triangleq (t,\,p,\,D)$ is the communication {event}
between robots~$(n_k,n'_k)\in \mathcal{E}$, representing the time,
location and the exchanged data, respectively;
and~$c_{n_kn'_k}\in C_r$, $\forall (n_k,\, n'_k)\in \mathcal{E}$
and $\forall r\geq 1$.
    \hfill $\blacksquare$
\end{definition}
The last condition states that all neighbors in~$\mathcal{E}$
should have communicated \emph{at least} once for each round~$C_r\in \mathcal{C}$.
During each round of communication,
one or several return events are determined online,
yielding the communication events between the robots and the operator.
Similarly,
let~$C_\texttt{h}\triangleq c^\texttt{h}_{i_1}\,c^\texttt{h}_{i_2}\,
\cdots c^\texttt{h}_{i_k}\,\cdots$
denote all the return events sorted in time,
where~$c^\texttt{h}_{i_k}\triangleq (t_k,\, D_{i_k},\, T^\texttt{h}_k,\, Q_k)$
is the communication event between robot~$i_k$ and the operator;
$t_k$ represents the time of the $k$-th return event;
$D_{i_k}$ denotes the data propagated to operator from robot~$i_k$;
$T^{\texttt{h}}_k\triangleq t^{\texttt{h}}_{k,1}t^{\texttt{h}}_{k,2}\cdots t^{\texttt{h}}_{k,N}$
is the time stamps of the data from each robot contained in~$D_{i_k}$;
and~$Q_k$ is the request from the operator.
Each time a robot returns,
the operator receives data, updates its own knowledge accordingly,
and then sends requests.
Given these return events,
the data stored by the operator at time~$t$
is given by~$D_\texttt{h}(t)\triangleq \bigcup_{t_k\leq t}D_{i_k}$.
Denote by $T_\texttt{h}(t)\triangleq t^\texttt{h}_1(t)t^\texttt{h}_2(t)\cdots t^\texttt{h}_N(t)$
the corresponding timestamps of $D_\texttt{h}(t)$.
It can be shown that if the condition:
\begin{equation}\label{eq:delta-r}
  t-t^\texttt{h}_n(t)\leq T_\texttt{h},
\end{equation}
holds, $\forall t\geq 0, \forall n\in \mathcal{N}$,
then the constraint~\eqref{eq:frequency} is fulfilled.
Moreover, the constraint~\eqref{eq:delta-r}
can be fulfilled by putting constraints on each return event,
hence enabling distributed coordination for a global constraint,
as stated below.

\begin{lemma}\label{prop:replace-Ts}
The constraint~\eqref{eq:delta-r} is fulfilled, if
both (i) $t_{k+1}\leq T_{\texttt{h}} + \chi^{\texttt{h}}_k$;
 and (ii) $\chi^{\texttt{h}}_k < \chi^{\texttt{h}}_{k+1}$,
 hold for each~$k\geq 0$,
 where $\chi^{\texttt{h}}_k\triangleq \mathbf{min}_n\,\big\{T^\texttt{h}_k[n]\big\}$.
\end{lemma}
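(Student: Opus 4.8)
The plan is to reduce the global, continuous-time requirement in \eqref{eq:delta-r} to a finite family of bounds, one per inter-return interval, and then glue them together. First I would make precise how the operator's per-robot timestamp evolves. Since the operator only gains information at return events and always keeps the freshest copy of each robot's data, the timestamp is the running maximum $t^\texttt{h}_n(t)=\max\{T^\texttt{h}_j[n]:t_j\le t\}$, which is piecewise constant in $t$ and jumps only at the return times $t_1<t_2<\cdots$. Consequently, on any half-open interval $[t_k,t_{k+1})$ the staleness $t-t^\texttt{h}_n(t)$ is increasing in $t$ and is maximized as $t\uparrow t_{k+1}$, so it suffices to bound $t_{k+1}-t^\texttt{h}_n(t)$ on this interval, for every robot $n$ and every $k\ge 0$ (with the convention $t_0=0$ and $\chi^\texttt{h}_0=0$ capturing the initial deployment where all data is fresh).

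The central step is the claim that, throughout $[t_k,t_{k+1})$, every robot's timestamp at the operator is already at least $\chi^\texttt{h}_k$. This follows directly from the running-max characterization: once the $k$-th return has occurred, $D_\texttt{h}(t)\supseteq D_{i_k}$, hence $t^\texttt{h}_n(t)\ge T^\texttt{h}_k[n]\ge \min_m T^\texttt{h}_k[m]=\chi^\texttt{h}_k$ for each $n$. Combining this with condition (i), for every $t\in[t_k,t_{k+1})$ and every $n$ I obtain $t-t^\texttt{h}_n(t)\le t-\chi^\texttt{h}_k<t_{k+1}-\chi^\texttt{h}_k\le T_\texttt{h}$, which is exactly \eqref{eq:delta-r} restricted to this interval.

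It then remains to combine these per-interval bounds into the claim for all $t\ge 0$, and this is where condition (ii) is essential. Strict monotonicity $\chi^\texttt{h}_k<\chi^\texttt{h}_{k+1}$ guarantees that the schedule prescribed by (i) stays feasible at every step: since $t_{k+1}\le T_\texttt{h}+\chi^\texttt{h}_k<T_\texttt{h}+\chi^\texttt{h}_{k+1}$, there is always room to place the next return time $t_{k+2}$ strictly after $t_{k+1}$, so the intervals are well-defined and tile $[0,\overline{T})$ without gaps. I would also note, as a sanity check, that (ii) cannot be dropped: if two consecutive returns carried the same bottleneck timestamp $\chi^\texttt{h}_{k+1}=\chi^\texttt{h}_k$, then (i) would force $t_{k+2}\le t_{k+1}$, the schedule would deadlock, and the staleness of the lagging robot would cross $T_\texttt{h}$ immediately after $t_{k+1}$. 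I expect the main obstacle to be precisely this bookkeeping: formalizing the running-max timestamp, establishing the estimate $\min_n t^\texttt{h}_n(t)\ge\chi^\texttt{h}_k$ on each interval, and articulating cleanly why the per-event latency cap (i) and the progress condition (ii) are \emph{jointly} necessary rather than either one alone.
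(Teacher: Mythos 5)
Your proof is correct and follows essentially the same route as the paper's: on each inter-return interval you lower-bound the operator's per-robot timestamp by $\chi^{\texttt{h}}_k$ (since $D_\texttt{h}(t)\supseteq D_{i_k}$ after $t_k$) and invoke condition (i) to obtain $t-t^{\texttt{h}}_n(t)\le t_{k+1}-\chi^{\texttt{h}}_k\le T_{\texttt{h}}$, which is exactly the paper's chain of inequalities. The only cosmetic difference is where condition (ii) enters --- the paper uses it inductively to verify the bound at the return instants themselves, whereas you use it to argue the intervals chain together --- and your aside that $\chi^{\texttt{h}}_{k+1}=\chi^{\texttt{h}}_k$ would force $t_{k+2}\le t_{k+1}$ is slightly overstated, since (i) would then only impose on $t_{k+2}$ the same upper bound as on $t_{k+1}$.
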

\begin{proof}
  See Appendix in Sec.~\ref{sec:app}.
\end{proof}

The first condition above indicates that each return event should occur
before the largest latency exceeds the given bound,
estimated by the minimal timestamp of the previous return events plus $T_\texttt{h}$;
the second condition above ensures that the minimal stamp
of each return event is strictly increasing,
which in turn guarantees a monotonic decrease of the largest latency
within the fleet.
These two conditions combined ensure that the latency of each robot
is bounded by $T_\texttt{h}$ at all time.


\subsection{Simultaneous Exploration
  and Intermit. Communication}
\label{subsec:spread}
Following the above protocol,
this section describes how the communication events in~\eqref{eq:round}
above are optimized online in a distributed way for the $Q_0$ and $Q_1$ requests.

\subsubsection{Distributed Frontier-based Exploration}\label{subsec:frontiers}
The frontier-based method in~\citet{yamauchi1997frontier}
allows a robot to explore the environment
by repetitively reaching the frontiers on the boundaries between
the explored and unexplored areas within its local map.
Briefly speaking,
given the local map~$M_i(t)$ of robot~$i\in \mathcal{N}$ at time~$t>0$,
these boundaries can be identified via a Breadth-First-Search (BFS),
which are then clustered to a few frontiers by various
metrics from~\citet{holz2010evaluating}.
Denote by~$\mathcal{F}_i(t)=\{f_k\}$ the set of frontiers,
where each frontier~$f_k\in \partial M_i$ is on the boundary of explored areas~$\partial M_i$.
Thus, $\mathcal{F}_i(t)$ is empty if~$M_i(t)$ is fully explored.

Following the protocol of intermittent communication,
consider that robots~$i$ and $j$ communicate during the
the event~$c_{ij}\in C_r$ at time~$t$.
With a slight abuse of notation,
the current local \emph{plan} of robot~$i$ is given by:
\begin{equation}\label{eq:local-plan}
\Gamma_i(t)\triangleq c_{ij}\mathbf{p}^0_i c^1_i \mathbf{p}^1_ic^2_i\cdots \mathbf{p}^{K_i}_ic^{K_i}_i,
\end{equation}
where~$c_{ij}$ is the current event;
$c^k_i$ is the planned sequence of future events;
$\mathbf{p}^k_i$ is the planned path between these events including visiting frontiers;
and $K_i=|\mathcal{N}_i|$ is the number of neighbors for
robot~$i$ in~$\mathcal{G}$.
Note that the time and place for the events~$\{c^k_i\}\subset \Gamma_i(t)$
are already confirmed, thus \emph{should not} be altered.
However, the intermediate paths~$\{\mathbf{p}^k_i\}$ can be modified
by choosing a different path to visit more frontiers,
as shown in Fig.~\ref{fig:local-optimization}.
More specifically,
denote by~$\widehat{\Gamma}_i(t)\triangleq
\widehat{\mathbf{p}}^0_i c^1_i \widehat{\mathbf{p}}^1_i
c^2_i\cdots \widehat{\mathbf{p}}^{K_i}_ic^{K_i}_i\widehat{\mathbf{p}}_{ij}\widehat{c}_{ij}$
the \emph{revised} plan of robot~$i$,
where~$\{\widehat{\mathbf{p}}^k_i\}$ are the updated paths
and~$\widehat{\mathbf{p}}_{ij}$ is the newly-added path
before the next  event~$\widehat{c}_{ij}$.
Thus, the sub-problem of local coordination can be formulated as follows.
\begin{figure}[t]
	\centering
	\includegraphics[width=0.9\linewidth]{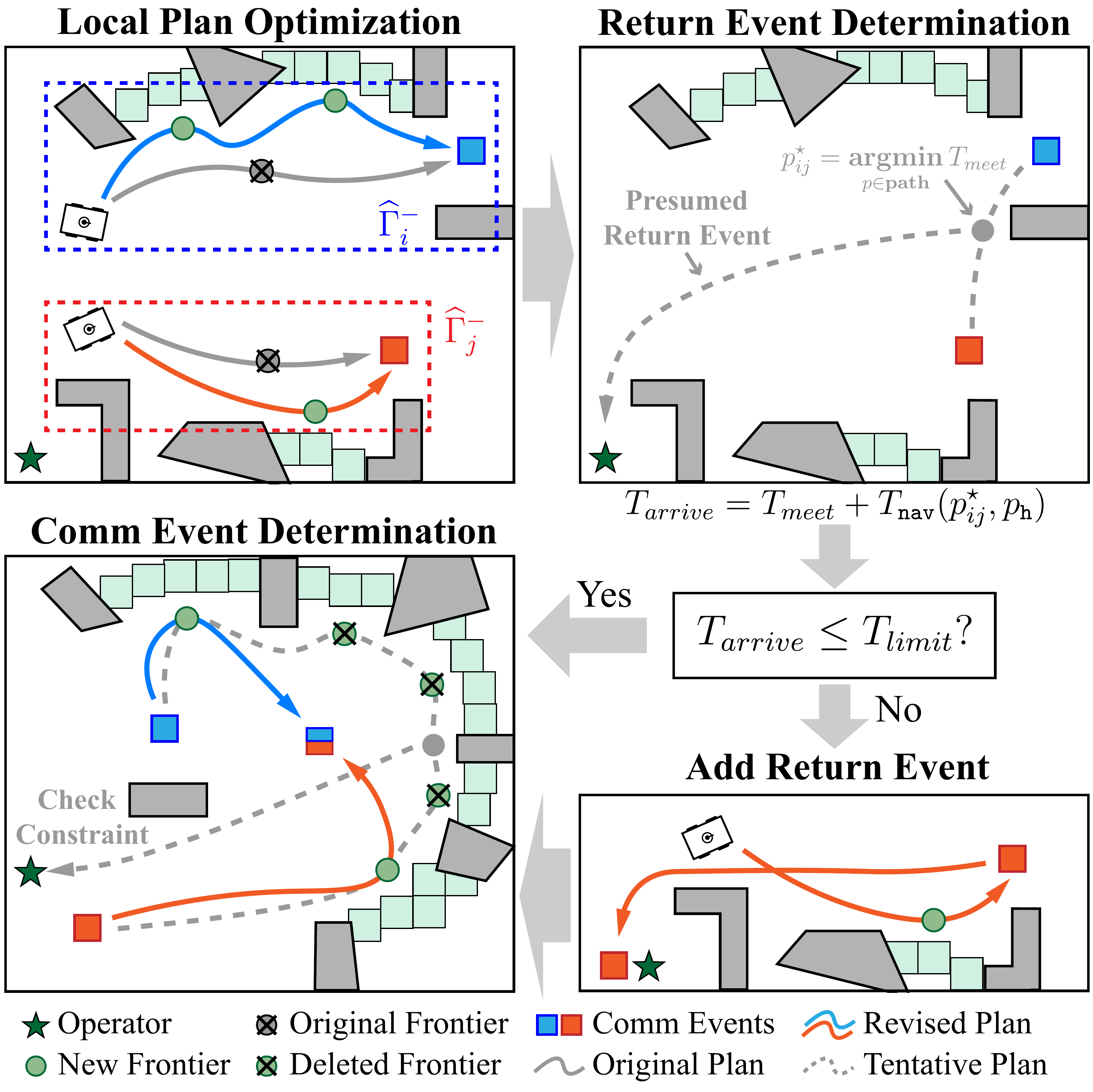}
	\caption{Illustration of the pairwise coordination strategy
  during intermittent communication event.
  }
	\label{fig:local-optimization}
	\vspace{-7mm}
\end{figure}

\begin{problem}\label{prob:local-plan}
Given the local data $(\Gamma_i,\,M_i)$ and $(\Gamma_j,\,M_j)$,
the objective is to synthesize the revised plans~$\widehat{\Gamma}_i$, $\widehat{\Gamma}_j$
such that: (i) the total number of frontiers that are planned
to be visited by robots~$i,j$ is maximized;
and (ii) the largest latency~$\delta_r$ in~\eqref{eq:delta-r}
for the current round is bounded by~$T_{\texttt{h}}$.
\hfill $\blacksquare$
\end{problem}

  Note that the first objective is defined locally for
  robots~$i,j$, whereas the second objective is a global constraint.
  The remaining part describes how it can be solved in a distributed way
  via intermittent communication and local optimization.

\subsubsection{Optimization of Local Plans}\label{subsec:coordination}
To begin with, robot~$i$ executes event~$c_{ij}$ in~\eqref{eq:local-plan} by meeting
with robot~$j$ around time~$t_{ij}$ and at location~$p_{ij}$.
The exchanged {data} from robot~$i$ is given by:
\begin{equation}\label{eq:exchanged-data}
D_i(t) \triangleq (M_i,\,\Gamma_i,\,\mathcal{F}'_i,\,\Omega_i, H_i),
\end{equation}
where~$M_i,\Gamma_i$ are defined earlier;
$\mathcal{F}'_i\triangleq \mathcal{F}_i\cup \mathcal{F}^-_i$
is the set of frontiers known to robot~$i$,
with~$\mathcal{F}_i$ being the frontiers assigned to robot~$i$
and~$\mathcal{F}^-_i$ the frontiers known to robot~$i$ but assigned
to~\emph{other} robots;
$\Omega_i\triangleq (\Omega_i^\texttt{l},\,\Omega_i^\texttt{h})$
is a tuple of two vectors with length~$N$,
i.e., $\Omega^\texttt{l}_i\triangleq \omega^\texttt{l}_1\omega^\texttt{l}_2
\cdots \omega^\texttt{l}_N$
is the estimated timestamps of the data from robot~$n\in \mathcal{N}$
that is known to robot~$i$ after its last communication event $c^{K_i}_i$;
and $\Omega^\texttt{h}_i\triangleq \omega^\texttt{h}_1\omega^\texttt{h}_2
\cdots \omega^\texttt{h}_N$
is the {estimated} timestamps of data from robot~$n\in \mathcal{N}$
that is known to the operator at time~$t^{K_i}_i$;
$H_i\triangleq \{(\mathcal{P}_q,\,p^{\star}_\texttt{h},\,t_\texttt{h})\}$
contains the requests from the operator,
including the prioritized region~$\mathcal{P}_q\subseteq {Q}_1$,
the desired position~$p^{\star}_\texttt{h}\in {Q}_2$,
and~$t_{\texttt{h}}$ is the timestamp for this request.
Note that~$\Omega^\texttt{l}_i$ and $\Omega^\texttt{h}_i$ are initialized as zeros
and updated each time when a communication event is confirmed.
The local data~$D_j$ of robot~$j$ is defined analogously.

After~$D_i$ and~$D_j$ are exchanged,
these data is processed by robot~$i$ as follows.
First, the merged map of~$M_i$ and~$M_j$ is computed as its new local map,
i.e., $M_{ij}\triangleq \texttt{merge}(M_i,M_j)$.
Then, the frontiers~$\mathcal{F}_{ij}$ associated with the merged~$M_{ij}$ is
computed, and the complete set of frontiers that can be visited is given by
$\widehat{\mathcal{F}} \triangleq  \mathcal{F}_{ij} \backslash
\big((\mathcal{F}^-_i \backslash \mathcal{F}_j)
\cup (\mathcal{F}^-_j \backslash \mathcal{F}_i)\big)$,
which excludes the frontiers in~$\mathcal{F}_{ij}$ that have been assigned
to other robots in the previous events.
Next, a new vector of timestamps~$\Omega^{\texttt{h}}_{ij}$ is calculated by
taking the maximum of $\Omega^{\texttt{h}}_i$ and $\Omega^{\texttt{h}}_j$
to derive the estimated stamps of each robot known to operator, i.e.,
\begin{equation}\label{eq:estimate-Th}
  \Omega^\texttt{h}_{ij}[n] \triangleq
  \textbf{max}\Big\{\Omega^\texttt{h}_i[n],\, \Omega^\texttt{h}_j[n]\Big\},
\end{equation}
where~$\Omega^{\texttt{h}}_{i}[n]=\omega^h_n$ is the timestamp of the data
from robot~$n\in \mathcal{N}$ as defined in~\eqref{eq:exchanged-data}.
It serves as an estimation of the timestamps~$T_{\texttt{h}}(t)$ for the operator in this round,
which is essential to determine whether a return event is needed later.
Lastly,
the requests~$H_i$ are merged with $H_j$ by the timestamps.

\begin{remark}\label{remark:merge}
The local map merging described above can be done
efficiently based on~\cite{andre2014coordinated},
often with a common coordinate system that is agreed-upon beforehand,
or via map fusion and alignment during execution
as proposed in~\cite{lajoie2020door,tian2022kimera}.
Tackling uncertainty and misalignment
when merging local maps with varying accuracy
is outside the scope of this work.
\hfill $\blacksquare$
\end{remark}

Given~$\widehat{\mathcal{F}}$ above,
a planning algorithm consists of three main steps:
(i) the frontiers in~$\widehat{\mathcal{F}}$ are divided into two clusters
$\widehat{\mathcal{F}}_i, \widehat{\mathcal{F}}_j$
associated with either robot~$i$ or $j$,
e.g., by comparing the minimum distance from~$f\in \widehat{\mathcal{F}}$ to
the confirmed communication locations in~$\Gamma_i$ and $\Gamma_j$;
(ii) a traveling salesman problem with time windows (TSPTW) is formulated
for each robot locally to optimize the sequence of frontiers to visit between
communication events,
i.e., the paths~$\{\widehat{\mathbf{p}}^k_{i}\}$ for robot~$i$ to
visit~$\widehat{\mathcal{F}}_i$
(the same for robot~$j$).
Detailed modeling is omitted here due to limited space.
The resulting preliminary plan is given by~$\widehat{\Gamma}'_i=
\widehat{\mathbf{p}}^0_i c^1_i \widehat{\mathbf{p}}^1_i
c^2_i\cdots \widehat{\mathbf{p}}^{K_i}_ic^{K_i}_i\mathbf{p}^{K_i+1}_{i}$,
which is \emph{not} yet the final revised plan~$\widehat{\Gamma}_i$
as the communication event~$\widehat{c}_{ij}$ has not been determined;
(iii) the next communication event~$\widehat{c}_{ij}$
is optimized to fulfill~\eqref{eq:delta-r},
given~$\widehat{\Gamma}'_i$ and~$\widehat{\Gamma}'_j$.

\begin{remark}\label{remark:mvrp}
  The first two steps above can \emph{not} be replaced by formulating and
  solving a multiple-vehicle routing problem with time window (MVRPTW).
  This is due to the fact that the communication events are \emph{already assigned}
  to respective robots and only the frontiers between these events can be shuffled
  subject to the time windows.
  \hfill $\blacksquare$
\end{remark}

\subsubsection{Next Communication Event}\label{subsec:communication}
To ensure that the latency~$\delta_r \leq T_{\texttt{h}}$,
an additional constraint regarding
the next communication event~$\widehat{c}_{ij}$ is added, i.e.,
\begin{equation}\label{eq:meet-constraint}
  \widehat{t}_{ij}+T_i^{\texttt{nav}}(\widehat{p}_{ij},\, p_{\texttt{h}})
  \leq T_{\texttt{h}}
  +\textbf{min}_n\, \big\{\Omega^{\texttt{h}}_{ij}[n]\big\},
\end{equation}
where~$\widehat{t}_{ij},\, \widehat{p}_{ij}$ are the expected
meeting time and location, respectively;
$p_{\texttt{h}}$ is the current location of the operator;
$T_i^{\texttt{nav}}(p_1,\, p_2)$ is the estimated duration
for robot~$i$ to travel from location~$p_1$ to~$p_2$ within its map~$M_i$,
(which is set to $\infty$ if~$p_2$ is un-reachable);
$\Omega^\texttt{h}_{ij}$ is the vector of timestamps
as defined in~\eqref{eq:estimate-Th}.
It indicates that if robot~$i$ meets the operator directly
after meeting robot~$j$,
the operator can receive the data from each robot~$n\in \mathcal{N}$
with a latency less than~$T_{\texttt{h}}$.
Note that~$\Omega^\texttt{h}_{ij}$ in constraint~\eqref{eq:meet-constraint}
is updated according to the last confirmed communication event
in~$\widehat{\Gamma}'_i$ and~$\widehat{\Gamma}'_j$.
Moreover, since the meeting events in the preliminary
plan~$\widehat{\Gamma}'_i$ are already fixed,
it can be divided into two segments:
$\widehat{\Gamma}^-_i \triangleq \widehat{\mathbf{p}}^0_i c^1_i
\cdots \widehat{\mathbf{p}}^{K_i}_ic^{K_i}_i$
and $\mathbf{p}^{K_i+1}_{i}$(the same for robot~$j$).
In the final plan,
the segment~$\widehat{\Gamma}^-_i$ is reserved,
while the remaining frontiers, denoted
by~$\widehat{\mathcal{F}}_{ij}$,
are re-assigned as intermediate waypoints between the
events~$c^{K_i}_{i}$ and $\widehat{c}_{ij}$ for robot~$i$
(between~$c^{K_j}_{j}$ and $\widehat{c}_{ij}$ for robot~$j$).
Thus, the final plan is given
by~$\widehat{\Gamma}_i \triangleq \widehat{\Gamma}^-_i + \widehat{\Gamma}^+_i$,
where~$\widehat{\Gamma}^+_i \triangleq \widehat{\mathbf{p}}^{K_i+1}_{i}
\widehat{c}_{ij}$ and $\widehat{\Gamma}^+_j \triangleq \widehat{\mathbf{p}}^{K_j+1}_{j}
\widehat{c}_{ij}$ are the tailing segments to be optimized as stated below.

\begin{problem}\label{prob:next-event}
Given~$\widehat{\Gamma}^-_i$, $\widehat{\Gamma}^-_j$
and $\widehat{\mathcal{F}}_{ij}$,
$\widehat{\Gamma}^+_i,\, \widehat{\Gamma}^+_j$ should be chosen
such that:
(i) the number of frontiers contained
in~$\widehat{\Gamma}^+_i, \widehat{\Gamma}^+_j$ is maximized;
and (ii) condition~\eqref{eq:meet-constraint} holds for $\widehat{c}_{ij}$.
\hfill $\blacksquare$
\end{problem}

The proposed solution is summarized in Alg.~\ref{alg:opt-com}.
It consists of two main parts:
(i) to determine whether robot~$i$ (or robot~$j$)
should return to the operator as a {return event},
\emph{after} its latest confirmed communication event~${c}^{K_i}_i$
(or~${c}^{K_j}_j$)
and \emph{before} the communication event~$\widehat{c}_{ij}$;
(ii) to optimize the next communication event~$\widehat{c}_{ij}$
given the return event.

\begin{algorithm}[t]
	\caption{Optimize Next Communication Event}
  \label{alg:opt-com}
	\LinesNumbered
        \SetKwInOut{Input}{Input}
        \SetKwInOut{Output}{Output}
\Input{$\widehat{\Gamma}^-_i$, $\widehat{\Gamma}^-_j$,
  $\widehat{\mathcal{F}}_{ij}$.}
\Output{$\widehat{\Gamma}^+_i$, $\widehat{\Gamma}^+_j$.}
\tcc{\textbf{Return event}}
  $T_{\texttt{lim}} \leftarrow T_\texttt{h}+\textbf{min}_n\, \big\{\Omega^{\texttt{h}}_{ij}[n]\big\}$\;
  $\overline{\mathbf{p}}_{ij} \leftarrow \texttt{GenPath}(p^{K_i}_i,\, p^{K_j}_j)$\;\label{alg_line:gen_path}
  $c^\star_{ij} \leftarrow \texttt{SelComm}(c^{K_i}_i,\, c^{K_j}_j,\, \overline{\mathbf{p}}_{ij})$\;\label{alg_line:gen_comm}

  \If{not $\texttt{CheckConst}(c^\star_{ij},\, T_{\texttt{lim}})$}{\label{alg_line:need_return}
    $c^{K_j}_j \leftarrow \texttt{CommEvent}(p_{\texttt{h}},\, t^{K_j}_j+T_i^{\texttt{nav}}(p^{K_j}_j,p_{\texttt{h}}))$\;\label{alg_line:add_return}
    $\Omega^{\texttt{h}}_{ij} \leftarrow \texttt{UpdTimeStp}(\Omega^{\texttt{h}}_{ij},\, \Omega^{\texttt{l}}_j)$\;\label{alg_line:update_w}
    $T_{\texttt{lim}} \leftarrow T_\texttt{h}+\textbf{min}_n\, \big\{\Omega^{\texttt{h}}_{ij}[n]\big\}$\;\label{alg_line:update_lim}
  }
  \tcc{\textbf{Next comm. event}}
  $\widetilde{\mathcal{F}}_{ij}= \widehat{\mathcal{F}}_{ij}$\;
  \While{not $\texttt{CheckConst}(\widehat{c}_{ij},\, T_{\texttt{lim}})$}{\label{alg_line:continue_itr}
    $\widehat{\mathbf{F}}_{ij} \leftarrow \texttt{TSP}(p^{K_i}_i,\, \widetilde{\mathcal{F}}_{ij},\, p^{K_j}_j)$\;\label{alg_line:tsp}
    $\widehat{{\mathbf{p}}}_{ij} \leftarrow \texttt{GenPath}(p^{K_i}_i,\, \widehat{\mathbf{F}}_{ij},\, p^{K_j}_j)$\;\label{alg_line:gen_path_tsp}
  $\widehat{c}_{ij} \leftarrow \texttt{SelComm}(c^{K_i}_i,\, c^{K_j}_j,\, \widehat{{\mathbf{p}}}_{ij})$\;\label{alg_line:gen_comm_tsp}
   $f^\star \leftarrow {\mathbf{argmax}}_{f \in \widetilde{\mathcal{F}}_{ij}}\{cost(f)\}$\;\label{alg_line:select_frontier}
   $\widetilde{\mathcal{F}}_{ij}\leftarrow \widetilde{\mathcal{F}}_{ij}\backslash \{f^\star\}$\;\label{alg_line:remove_frontier}
  }
  $\widehat{\mathbf{p}}^{K_i+1}_i,\, \widehat{\mathbf{p}}^{K_j+1}_j \leftarrow
  \texttt{Split}(\widehat{{\mathbf{p}}}_{ij},\, \widehat{c}_{ij})$\;\label{alg_line:split}
  $\widehat{\Gamma}^+_i \leftarrow \widehat{\mathbf{p}}^{K_i+1}_i + \widehat{c}_{ij}$\;
  $\widehat{\Gamma}^+_j \leftarrow \widehat{\mathbf{p}}^{K_j+1}_j + \widehat{c}_{ij}$\;\label{alg_line:return}
\end{algorithm}

\textbf{Return event}.
Since both~$c^{K_i}_{i}$ and~$c^{K_j}_{j}$ belong to the merged map~$M_i$,
an obstacle-free shortest path~$\overline{\mathbf{p}}_{ij}$ can be found
by a path planning algorithm from~$c^{K_i}_{i}$ to~$c^{K_j}_{j}$ (Line~\ref{alg_line:gen_path}).
Then, a point~$p^\star_{ij}$ within~$\overline{\mathbf{p}}_{ij}$
is determined by minimizing the meeting time if~$p^\star_{ij}$ is chosen as
the meeting location, i.e.,
\begin{equation}\label{eq:select-p}
  p^\star_{ij}=\mathop{\mathbf{argmin}}\limits_{p \in \overline{\mathbf{p}}_{ij}}
  \left\{\mathop{\mathbf{max}}\limits_{\ell=i,j}\Big \{t^{K_\ell}_\ell
    +T_\ell^{\texttt{nav}}(p^{K_\ell}_\ell,\,p)\Big \}\right\},
 \end{equation}
 of which the associated time is denoted by~$t^\star_{ij}$ (Line~\ref{alg_line:gen_comm});
and $T_\ell^{\texttt{nav}}(\cdot)$ is defined in~\eqref{eq:meet-constraint}.
Then, if either robot~$i$ or~$j$ departs from~$p^\star_{ij}$ at time~$t^\star_{ij}$
and {immediately} returns to the operator,
the latency constraint can be checked by~\eqref{eq:meet-constraint}.
If it does not hold for either robot~$i$ or~$j$,
it indicates that {either} one needs to return to the operator {before} they meet (Line~\ref{alg_line:need_return}).
In this case, the robot that minimizes the largest latency is selected to return;
and if both robots can't reduce largest latency, both will have to return.
For now, assumed that
robot~$j$ returns to the operator after its latest event~$c^{K_j}_j$.
Since this return event is a meeting event at~$p_{\texttt{h}}$,
it can be appended to~$\widehat{\Gamma}^-_j$ as an additional event (Line~\ref{alg_line:add_return}),
i.e., the new~$C^{K_j}_j$ now denotes the return event.
Consequently, $\Omega^\texttt{h}_{ij}$ is updated by:
\begin{equation}\label{eq:update-omega-h-ij}
  \Omega^\texttt{h}_{ij}[n]=\left \{
    \begin{aligned}
      \mathop{\mathbf{max}}\Big\{\Omega^\texttt{h}_{ij}[n], \Omega^\texttt{l}_{j}[n]\Big\},&  &n\neq j;\\
      t^{K_j}_j+T_i^{\texttt{nav}}(p^{K_j}_j,p_{\texttt{h}}), & &n=j,
    \end{aligned}
  \right.
\end{equation}
which means that the timestamps of the data that
the operator possesses will be updated when robot~$j$ returns (Line~\ref*{alg_line:update_w}-\ref*{alg_line:update_lim}).
On the other hand, if neither robot~$i$ or $j$ should return before they meet,
their communication event is optimized in the following.

\textbf{Communication event}.
To begin with, a traveling salesman problem (TSP) is formulated
by setting~$p^{K_i}_i$ as the starting depot, $p^{K_j}_j$ as the ending location,
and frontiers in~$\widehat{\mathcal{F}}_{ij}$ as the waypoints to visit (Line~\ref{alg_line:tsp}).
Its solution is denoted by~$\widehat{\mathbf{F}}_{ij}$ as a sequence of frontiers.
Then, the next communication event is optimized by an iterative algorithm
as follows.
In each iteration,
a collision-free shortest path~$\widehat{\mathbf{p}}_{ij}$ is generated
that starts from~$p^{K_i}_i$,
traverses the frontiers in~$\widehat{\mathbf{F}}_{ij}$ in order,
and ends at~$p^{K_j}_j$ (Line~\ref{alg_line:gen_path_tsp}).
Within~$\widehat{\mathbf{p}}_{ij}$,
a potential meeting point $\widehat{p}_{ij}$ and time $\widehat{t}_{ij}$ is
determined by~\eqref{eq:select-p} (Line~\ref{alg_line:gen_comm_tsp}).
Consider the following two cases:
(i) If the result satisfies the constraint~\eqref{eq:meet-constraint},
the meeting event~$\widehat{c}_{ij}$ is set to~$(\widehat{p}_{ij},\widehat{t}_{ij})$
and the iteration stops;
(ii) However, if none of the waypoints in~$\widehat{\mathbf{p}}_{ij}$
can satisfy the constraint~\eqref{eq:meet-constraint} (Line~\ref{alg_line:continue_itr}),
the frontier in~$\widehat{\mathcal{F}}_{ij}$ with the maximum cost
is removed from~$\widehat{\mathcal{F}}_{ij}$ (Line~\ref{alg_line:select_frontier}-\ref{alg_line:remove_frontier}).
Then, the iteration continues.
Given a frontier~$f\in \widehat{\mathcal{F}}_{ij}$,
its cost is calculated in two cases:
First, if there is {no} prioritized region given by the operator,
it is defined by:
\begin{equation}\label{eq:frontier-cost}
cost(f)\triangleq T_i^{\texttt{nav}}(f,\, p_{\texttt{h}})
+\textbf{max}_{n\in \{i,j\}}\big\{T_n^{\texttt{nav}}(f,\, p^{K_n}_n))\big\},
\end{equation}
as the estimated time for robots~$i$ or~$j$ to reach~$f$
and then return to the operator;
Second, if there is a prioritized region by~$Q_1$ request,
the cost is given by~$cost(f)\triangleq T_i^{\texttt{nav}}(f,\, p_c)$,
where~$p_c$ is the center of the prioritized region,
as shown in Fig.~\ref{fig:illu-priority}.
As proven in Lemma~\ref{prop:exist-solution}, the above procedure terminates
in finite steps.

Thus, the respective paths~$\mathbf{p}^{K_i+1}_i$ and~$\mathbf{p}^{K_j+1}_j$
before~$\widehat{c}_{ij}$ in~$\widehat{\Gamma}^+_i$ and~$\widehat{\Gamma}^+_j$
are determined by splitting~$\widehat{\mathbf{p}}_{ij}$ into two segments at~$\widehat{p}_{ij}$
(Line~\ref{alg_line:split}-\ref{alg_line:return}).
Lastly, once the final plans~$\widehat{\Gamma}_i$ and~$\widehat{\Gamma}_j$ are
determined, data~$D_i$ in~\eqref{eq:exchanged-data} is updated with the
revised~$\Gamma_i$ and~$\mathcal{F}_i$ (the same for robot~$j$).
More importantly, the timestamps in~$\Omega_i$ are updated given this
new meeting event, i.e.,
$\Omega^\texttt{h}_i[n]=\Omega^\texttt{h}_{ij}[n]$,
where $\Omega^\texttt{h}_{ij}[n]$ is updated in~\eqref{eq:update-omega-h-ij}; and
\begin{equation}\label{eq:update-omega^l_i}
  \Omega^\texttt{l}_i[n]=\left \{
  \begin{aligned}
    &\widehat{t}_{ij},\, n \in \{i,j\};\\
    &\mathop{\mathbf{max}}\Big\{\Omega^\texttt{l}_i[n],\Omega^\texttt{l}_j[n]\Big\},\, n\neq i,j;
  \end{aligned}
  \right.
\end{equation}
where $\widehat{t}_{ij}$ is the confirmed meeting time.
Note that $\Omega_j$ is updated in the same way by robot~$j$.

\begin{figure}[t]
	\centering
	\includegraphics[width=0.97\linewidth]{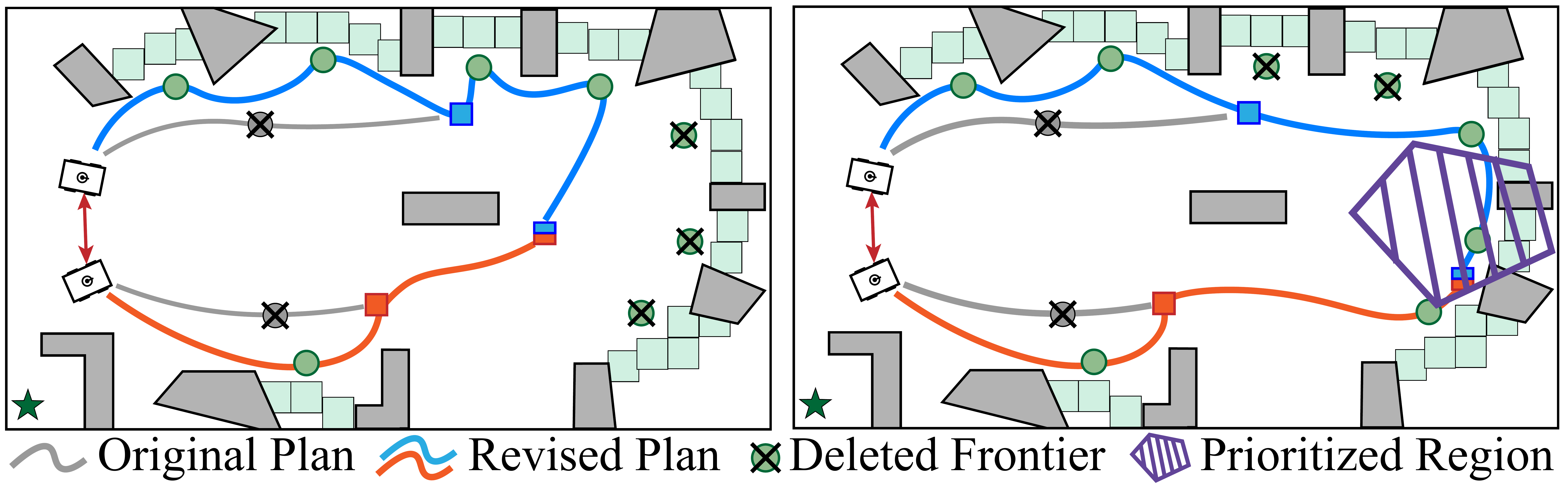}
	\caption{Illustration of the revised plans with (right) and without (left) the prioritized region.}
	\label{fig:illu-priority}
	\vspace{-7mm}
\end{figure}

\begin{remark}\label{remark:opt}
  The derived $\widehat{\Gamma}^+_i$ and~$\widehat{\Gamma}^+_j$ are optimal
  given the information available to robots~$i,j$ in this round, i.e.,
  the number of frontiers in~$\widehat{\Gamma}^+_i$ and~$\widehat{\Gamma}^+_j$
  is maximized subject to the constraints on the next communication event in~\eqref{eq:meet-constraint}.
  \hfill $\blacksquare$
\end{remark}


\subsection{Dynamic Movement of Operator}
\label{subsec:dynamic}
The $Q_2$ request in~\eqref{eq:problem} specifies that the operator wants
to move to another location within the explore area.
Indeed, if the operator remains static,
due to the latency constraint by~$T_\texttt{h}$ and the scarce communication,
the area that a robotic fleet can explore by \emph{any strategy} is strictly bounded.
i.e., the boundary of explored area to the operator
is upper-bounded by:
\begin{equation}\label{eq:max_area}
d_{\texttt{max}}\leq (1-\frac{1}{2^N})T_{\texttt{h}}v_{\texttt{max}}+Nd_{\texttt{com}},
\end{equation}
where~$v_{\texttt{max}}$ is the maximum speed of all robots;
$N$ is the total number of robots;
and $d_{\texttt{comm}}$ is the maximum range of communication.
An illustration of how the explored area changes
w.r.t. the number of robots is shown in Fig.~\ref{fig:human_move_fig}.
It can be seen that the exploration efficiency drastically decreases
as the robots are close to above boundary.
Consequently, if the operator remains static, it often happens that
the deployed fleet can not explore the workspace fully.

Therefore,
the operator is allowed to move within the explored area.
However, the operator can not move arbitrarily,
due to the same latency constraints in~\eqref{eq:delta-r}.
More specifically, each time robot~$i^{\star}\in \mathcal{N}$ returns,
the operator can choose a new pose~$p'_{\texttt{h}}$
within the latest map,
under the following constraint:
\begin{equation} \label{eq:base-location}
   T_i^{\texttt{nav}}(p'_{\texttt{h}},\, p^{k_i}_i)\leq
   T_i^{\texttt{nav}}(p_{\texttt{h}},\, p^{k_i}_i),
  \; \forall k_i\leq K_i,\, \forall i\in \mathcal{N};
\end{equation}
where~$p_{\texttt{h}}$ is the current pose of the operator;
$K_i$ is the number of meeting events for robot~$i$;
$p^{k_i}_i$ is the pose of the $k_i$-th communication event
for robot~$i$ out of its~$K_i$ events.
It requires that the new pose~$p'_{\texttt{h}}$ should be no further
to the confirmed meeting events
(in terms of navigation time for robot~$i$)
than the current pose~$p_{\texttt{h}}$.
In this way, the condition~\eqref{eq:meet-constraint} is still fulfilled
if the operator moves to~$p'_{\texttt{h}}$.
\begin{definition}[Feasible Region]\label{def:feasible}
  The set of all new poses of the operator~$p'_{\texttt{h}}$ that
  satisfy~\eqref{eq:base-location} is called the \emph{feasible region}
  and denoted by~$\mathcal{P}_{\texttt{h}}$. \hfill $\blacksquare$
\end{definition}

In other words, if the operator sets~$p^{\star}_{\texttt{h}}$
as the~$Q_2$ request that is outside of the feasible region,
the path to~$p^{\star}_{\texttt{h}}$ is decomposed into several segments
via intermediate waypoints such that the subsequent waypoint is always
feasible once the previous one is reached.
The operator can choose among these waypoints and move gradually
the desired~$p^{\star}_{\texttt{h}}$. 
For instance,
once the new pose~$p'_{\texttt{h}}$ is chosen,
it is sent to any return robot,
and then the operator moves to~$p'_{\texttt{h}}$.
This updated operator pose is broadcast to other robots via the
subsequent communication events among the robots,
such that the robots can plan the next return events
based on the updated pose.
This process is repeated until the operator reaches~$p^{\star}_{\texttt{h}}$,
as shown in Fig.~\ref{fig:human_move_fig}.
\begin{figure}[t]
  \centering
  \begin{subfigure}[b]{0.47\textwidth}
      \includegraphics[width=\textwidth]{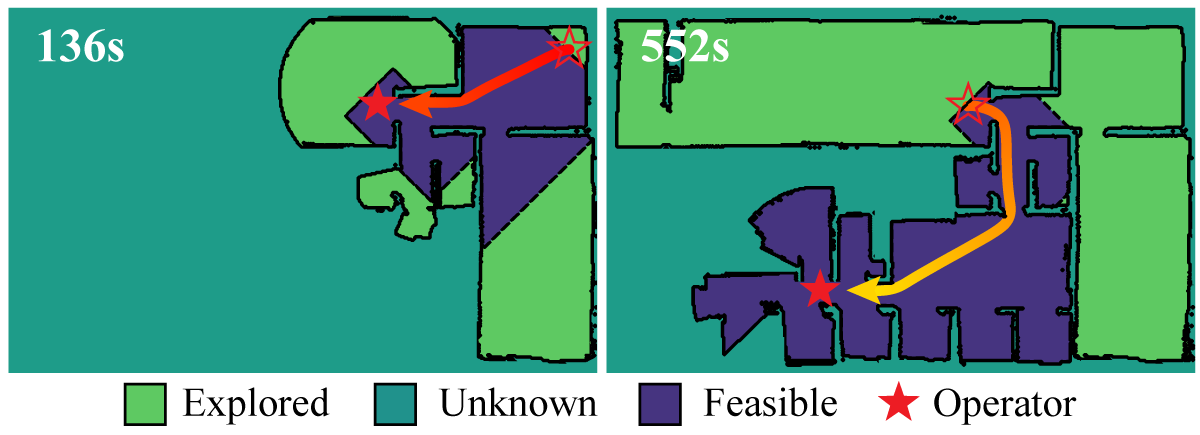}
  \end{subfigure}
  \begin{subfigure}[b]{0.47\textwidth}
      \includegraphics[width=\textwidth]{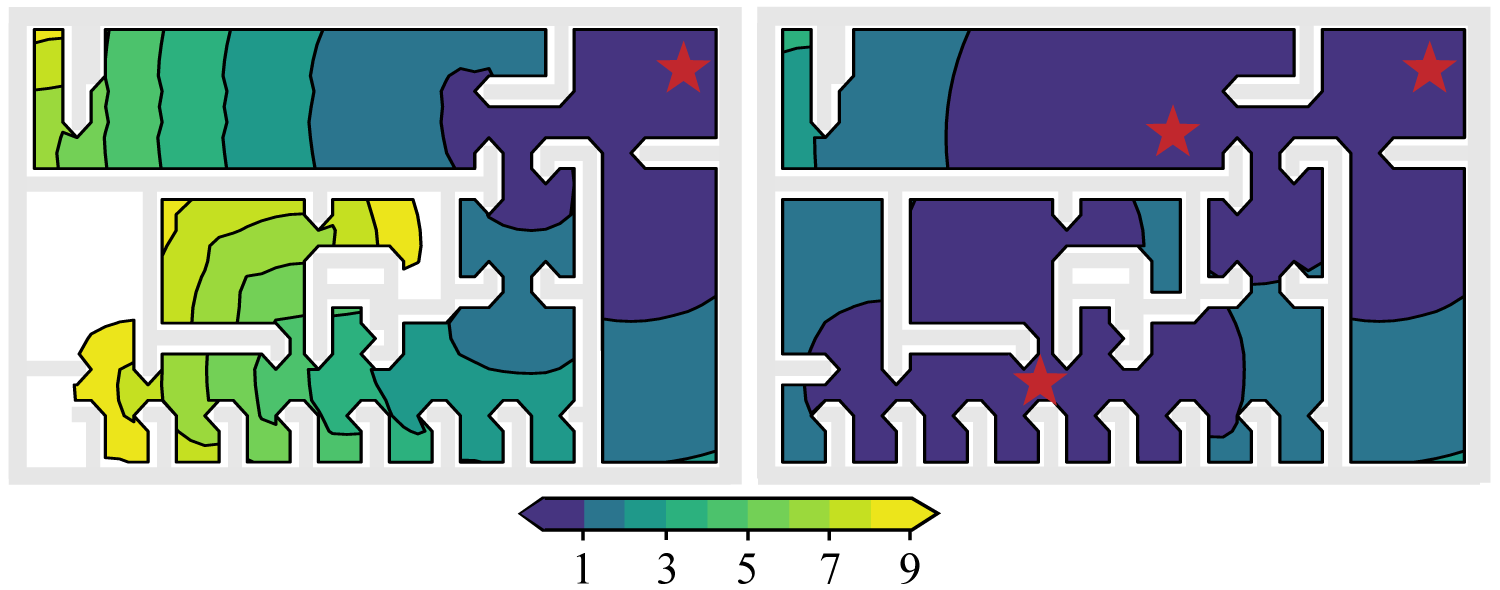}
  \end{subfigure}
  \caption{
    \textbf{Top}: Illustration of the feasible region for the operator
    defined in~\eqref{eq:base-location} at different time instants.
    \textbf{Bottom}: Comparison of the maximum explored area
    when the operator is static (\textbf{Left}) or dynamic (\textbf{Right}),
    where the color bar indicates the number of robots required.
    }\label{fig:human_move_fig}
    \vspace{-4mm}
  \end{figure}

  \begin{remark}\label{remark:allowed}
    Note that the operator behavior of choosing the next new pose can
    be sub-optimal (different from the planned~$p'_{\texttt{h}}$)
    or even arbitrary, as long as it belongs to the feasible
    region~$\mathcal{P}_{\texttt{h}}$ from Def.~\ref{def:feasible}.
    \hfill $\blacksquare$
  \end{remark}


\subsection{Discussion}\label{subsec:discussion}

\subsubsection{Correctness}
The existence of a solution by Alg.~\ref{alg:opt-com} and the guarantee on the~$Q_0$ request
are stated below, of which the proofs are given in the Appendix of Sec.~\ref{sec:app}.

\begin{lemma}\label{prop:exist-solution}
  Alg.~\ref{alg:opt-com} always returns a feasible solution
  during each intermittent communication for all rounds.
\end{lemma}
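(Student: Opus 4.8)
The plan is to establish feasibility in two parts: first, that the iterative \texttt{while} loop (Line~\ref{alg_line:continue_itr}) terminates after finitely many iterations; second, that upon termination the returned next event~$\widehat{c}_{ij}$ satisfies the latency constraint~\eqref{eq:meet-constraint}. The organizing idea is that the worst case of the loop degenerates to the \emph{direct} meeting between the two most recent confirmed events, and the feasibility of that direct meeting is precisely what the preceding return-event block is designed to enforce. Crucially, the termination argument and the feasibility argument are coupled: if \texttt{CheckConst} could still fail once~$\widetilde{\mathcal{F}}_{ij}=\emptyset$, the loop would attempt to remove a frontier from an empty set, so termination actually \emph{requires} that the empty-frontier case be feasible.

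\textbf{Termination.} Each pass of the loop that fails \texttt{CheckConst} removes exactly one frontier, the one of maximum cost, from~$\widetilde{\mathcal{F}}_{ij}$ (Lines~\ref{alg_line:select_frontier}--\ref{alg_line:remove_frontier}). Since~$\widehat{\mathcal{F}}_{ij}$ is finite and~$\widetilde{\mathcal{F}}_{ij}\subseteq\widehat{\mathcal{F}}_{ij}$ shrinks strictly, at most~$|\widehat{\mathcal{F}}_{ij}|$ iterations occur. I would then track the boundary case~$\widetilde{\mathcal{F}}_{ij}=\emptyset$: here the \texttt{TSP} call and \texttt{GenPath} (Lines~\ref{alg_line:tsp}--\ref{alg_line:gen_path_tsp}) reduce to the obstacle-free shortest path~$\overline{\mathbf{p}}_{ij}$ from~$p^{K_i}_i$ to~$p^{K_j}_j$ carrying no intermediate frontiers, and \texttt{SelComm} places the rendezvous at the point~$p^\star_{ij}$ of~\eqref{eq:select-p}. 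Thus it suffices to show that this direct meeting passes \texttt{CheckConst} for the final value of~$T_{\texttt{lim}}$.

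\textbf{Feasibility of the direct meeting.} I would argue that the return-event block preceding the loop guarantees exactly this. It evaluates \texttt{CheckConst} on the same direct path (Lines~\ref{alg_line:gen_path}--\ref{alg_line:need_return}); if it fails, a return event is appended so that some robot, say~$j$, first travels to~$p_{\texttt{h}}$, after which~$\Omega^{\texttt{h}}_{ij}$ and hence~$T_{\texttt{lim}}=T_\texttt{h}+\min_n\{\Omega^{\texttt{h}}_{ij}[n]\}$ are enlarged (Lines~\ref{alg_line:add_return}--\ref{alg_line:update_lim}). The ultimate fallback, invoked when a single return does not suffice, is that \emph{both} robots return and rendezvous at the operator: then~$\widehat{p}_{ij}=p_{\texttt{h}}$, so~$T_i^{\texttt{nav}}(\widehat{p}_{ij},p_{\texttt{h}})=0$ and~\eqref{eq:meet-constraint} collapses to~$\widehat{t}_{ij}\leq T_{\texttt{lim}}$. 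I would close this with an invariant maintained across rounds: at the previous confirmed events~\eqref{eq:meet-constraint} held, with the base case supplied by the robots' initial co-location near~$p_0$; this bounds how far~$p^{K_i}_i,p^{K_j}_j$ can lie from~$p_{\texttt{h}}$ and therefore bounds the rendezvous time~$\widehat{t}_{ij}$ against~$T_\texttt{h}+\min_n\{\Omega^{\texttt{h}}_{ij}[n]\}$.

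The main obstacle is this last step, namely showing that forcing both returns always restores~$\widehat{t}_{ij}\leq T_{\texttt{lim}}$. After both robots reach~$p_{\texttt{h}}$ their own estimated stamps~$\Omega^{\texttt{h}}_{ij}[i],\Omega^{\texttt{h}}_{ij}[j]$ are refreshed to their arrival times, but the minimum defining~$T_{\texttt{lim}}$ may be attained by a \emph{third} robot~$n\neq i,j$ whose stamp was only relayed through earlier pairwise exchanges. Handling this case requires the cross-round invariant tying~$\min_n\{\Omega^{\texttt{h}}_{ij}[n]\}$ to the monotone, bounded-latency structure guaranteed by the conditions of Lemma~\ref{prop:replace-Ts} together with the once-per-round communication of Definition~\ref{def:rounds}, so that no stale estimate can render the fallback infeasible. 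Once this invariant is established, the loop is guaranteed to exit through \texttt{CheckConst} no later than when~$\widetilde{\mathcal{F}}_{ij}=\emptyset$, and hence Alg.~\ref{alg:opt-com} returns a feasible pair~$(\widehat{\Gamma}^+_i,\widehat{\Gamma}^+_j)$ in every round.
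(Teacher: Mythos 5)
Your skeleton matches the paper's: reduce to the case where the while-loop exhausts $\widetilde{\mathcal{F}}_{ij}$, then show the resulting direct meeting is feasible. But you explicitly leave the crux open --- you name the ``main obstacle'' (that $\min_n\{\Omega^{\texttt{h}}_{ij}[n]\}$ may be attained by a third robot with a stale relayed stamp) and defer it to an invariant you never establish, so as written this is not a proof. Moreover, your proposed fallback (``both robots return and rendezvous at $p_{\texttt{h}}$'') is not how the argument closes: even then you would still need $\widehat{t}_{ij}\leq T_{\texttt{lim}}$, i.e.\ a bound on how long it takes both robots to reach $p_{\texttt{h}}$ from their last confirmed events, which is exactly the same difficulty in disguise.

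The missing idea is an induction over communication events combined with the monotonicity of $T_{\texttt{lim}}$. When a return event is appended, robot~$j$'s last confirmed event sits at $p^{K_j}_j=p_{\texttt{h}}$, so $p^\star_{ij}$ from~\eqref{eq:select-p} lies on the shortest path from $p^{K_i}_i$ to $p_{\texttt{h}}$ and the quantity checked by \texttt{CheckConst} telescopes to $\mathbf{max}\{t^{K_i}_i+T_{\texttt{nav}}(p^{K_i}_i,p_{\texttt{h}}),\,t^{K_j}_j\}$. Each of these two terms is bounded by the limit $T^{K_i}_{\texttt{lim}}$ (resp.\ $T^{K_j}_{\texttt{lim}}$) that was enforced when the corresponding \emph{previous} event was planned --- that is the inductive hypothesis, with the base case supplied by the initial co-location of the fleet at the operator. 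Your third-robot worry is then dissolved not by a separate latency invariant but by the observation that $\Omega^{\texttt{h}}$ is only ever updated by componentwise maxima (equations~\eqref{eq:estimate-Th} and~\eqref{eq:update-omega-h-ij}), so $\min_n\{\Omega^{\texttt{h}}_{ij}[n]\}$ and hence $T_{\texttt{lim}}$ are non-decreasing from one event to the next; therefore $\mathbf{max}\{T^{K_i}_{\texttt{lim}},T^{K_j}_{\texttt{lim}}\}\leq T_{\texttt{lim}}$ and the direct meeting passes the check. Without the telescoping identity and this monotonicity step, the feasibility of the empty-frontier case --- and with it termination of the loop, as you yourself note --- is not established.
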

\begin{theorem}\label{theo:Ts-constraint}
  Under the proposed overall strategy,
  the latency constraint by~$T_{\texttt{h}}$ in~\eqref{eq:frequency} is fulfilled.
\end{theorem}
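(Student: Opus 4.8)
The plan is to reduce the global latency constraint in~\eqref{eq:frequency} to the pair of local conditions in Lemma~\ref{prop:replace-Ts}, and then verify that the overall strategy enforces those conditions at every return event. Recall from the discussion following Definition~\ref{def:rounds} that the constraint~\eqref{eq:frequency} follows once~\eqref{eq:delta-r} holds, i.e. once $t-t^{\texttt{h}}_n(t)\leq T_{\texttt{h}}$ for all $t\geq 0$ and all $n\in\mathcal{N}$. By Lemma~\ref{prop:replace-Ts}, \eqref{eq:delta-r} is in turn implied by the two conditions (i) $t_{k+1}\leq T_{\texttt{h}}+\chi^{\texttt{h}}_k$ and (ii) $\chi^{\texttt{h}}_k<\chi^{\texttt{h}}_{k+1}$ on the sequence of return events, where $\chi^{\texttt{h}}_k=\mathbf{min}_n\{T^{\texttt{h}}_k[n]\}$. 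So the entire theorem collapses to showing that the strategy of Sec.~\ref{subsec:spread} and Alg.~\ref{alg:opt-com}, together with the feasible-region restriction of Sec.~\ref{subsec:dynamic}, schedules return events satisfying (i) and (ii).

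First I would establish condition (i). The key observation is that the quantity $T_{\texttt{lim}}$ computed in Alg.~\ref{alg:opt-com} equals exactly $T_{\texttt{h}}+\mathbf{min}_n\{\Omega^{\texttt{h}}_{ij}[n]\}$, and by~\eqref{eq:estimate-Th} the vector $\Omega^{\texttt{h}}_{ij}$ is a valid (conservative) estimate of the operator's timestamp vector $T_{\texttt{h}}(t)$ at the current round, so $\mathbf{min}_n\{\Omega^{\texttt{h}}_{ij}[n]\}\leq\chi^{\texttt{h}}_k$ for the most recent return event. The constraint check $\texttt{CheckConst}$ enforces~\eqref{eq:meet-constraint}, which says the next meeting time plus the travel time to the operator does not exceed $T_{\texttt{lim}}$; when this fails, Line~\ref{alg_line:need_return} inserts a return event before the bound is violated, and Lemma~\ref{prop:exist-solution} guarantees such an insertion always succeeds. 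Hence the time of the next return is forced to occur no later than $T_{\texttt{h}}$ after the minimal confirmed timestamp, which is precisely (i).

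Next I would verify condition (ii), the strict monotonicity $\chi^{\texttt{h}}_k<\chi^{\texttt{h}}_{k+1}$. Here the argument rests on the update rule~\eqref{eq:update-omega-h-ij}: when robot~$j$ returns, its own entry $\Omega^{\texttt{h}}_{ij}[j]$ is refreshed to the strictly larger, more recent timestamp $t^{K_j}_j+T_i^{\texttt{nav}}(p^{K_j}_j,p_{\texttt{h}})$, while every other entry is updated by a $\mathbf{max}$ and thus never decreases. I would argue that the robot chosen to return is precisely the one (or ones) attaining the current minimum entry, so that refreshing it strictly raises $\mathbf{min}_n\{\cdot\}$; the text's rule ``the robot that minimizes the largest latency is selected to return; and if both robots can't reduce largest latency, both will have to return'' is exactly what guarantees the minimum entry is advanced at each return, giving strict increase.

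The main obstacle I anticipate is the interaction between the local, distributed estimates $\Omega^{\texttt{h}}_i,\Omega^{\texttt{h}}_j$ and the true operator timestamps $T_{\texttt{h}}(t)$: the proof must argue that each robot's locally maintained $\Omega^{\texttt{h}}$ is always a \emph{conservative} (i.e. no-larger) estimate of what the operator actually knows, so that enforcing~\eqref{eq:meet-constraint} against the estimate is sufficient to enforce the true latency bound. This requires an inductive invariant over communication rounds showing that the $\mathbf{max}$-based merging in~\eqref{eq:estimate-Th} never overestimates, combined with the fact that the feasible-region constraint~\eqref{eq:base-location} of Sec.~\ref{subsec:dynamic} preserves~\eqref{eq:meet-constraint} even when the operator moves. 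I would therefore close the argument by an induction on the round index~$r$: assuming the invariant and conditions (i)--(ii) hold up to round~$r$, the construction in Alg.~\ref{alg:opt-com} together with~\eqref{eq:update-omega^l_i}--\eqref{eq:update-omega-h-ij} reestablishes them for round~$r+1$, and the base case holds since all robots start co-located at $p_0$ with zero-initialized timestamps. Invoking Lemma~\ref{prop:replace-Ts} then yields~\eqref{eq:delta-r}, hence~\eqref{eq:frequency}, completing the proof.
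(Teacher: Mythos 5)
Your overall reduction is exactly the paper's: invoke Lemma~\ref{prop:exist-solution} for feasibility of every pairwise coordination, then verify the two conditions of Lemma~\ref{prop:replace-Ts}, whose condition (i) follows because \texttt{CheckConst} enforces~\eqref{eq:meet-constraint} against the conservative estimate $\mathbf{min}_n\{\Omega^{\texttt{h}}_{ij}[n]\}\leq \chi^{\texttt{h}}_k$. That part is fine and matches the paper.

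The gap is in your argument for condition (ii), the strict monotonicity $\chi^{\texttt{h}}_k<\chi^{\texttt{h}}_{k+1}$. You claim the returning robot is ``precisely the one attaining the current minimum entry,'' so that the refresh in~\eqref{eq:update-omega-h-ij} strictly raises the minimum. This conflates two different indices: the minimum in $\chi^{\texttt{h}}_k=\mathbf{min}_n\{T^{\texttt{h}}_k[n]\}$ ranges over the robots $n$ \emph{whose data} is carried to the operator, whereas the returning robot is merely the courier. The courier's own entry is indeed refreshed to its (fresh) return time, but that entry is typically \emph{not} where the minimum is attained --- the minimum sits at whichever robot's data is stalest in the courier's possession, and~\eqref{eq:update-omega-h-ij} only updates those other entries by a $\mathbf{max}$ with possibly unchanged values. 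So your mechanism does not by itself advance $\chi^{\texttt{h}}_k$. The paper closes this by appealing to the ring communication topology: if robot $i$ returns before meeting its successor $j$, then $\mathbf{argmin}_n\{T^{\texttt{h}}_r[n]\}=j$ (the stalest entry is the successor's, dated from $j$'s earlier meeting with its own successor $k$ at time $t^-_{jk}$); the next return is performed by $j$ or a further successor, which by the ring's propagation property carries the timestamp of the subsequent $i$--$j$ meeting, giving $\mathbf{min}_n\{T^{\texttt{h}}_{r+1}[n]\}\geq t_{ij}>t^-_{jk}=\mathbf{min}_n\{T^{\texttt{h}}_r[n]\}$. Without identifying which entry attains the minimum and why the next courier necessarily refreshes \emph{that} entry (and all others) beyond it, condition (ii) is not established; your proposed selection rule (``minimize the largest latency'') does not substitute for this topological argument.
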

\usetikzlibrary{shapes.geometric}
\usetikzlibrary{patterns}
\pgfplotstableread[col sep=comma]{
	type, Explore, Meet, Return
	Ring, 48.4, 48.2, 3.4
	Line, 34.9, 50.6, 14.5
	Full, 25.0, 67.7, 7.3
  Tree, 44.7, 44.7, 10.6
  Kite, 31.7, 61.3, 7.0
	}\mytable

\definecolor{blueaccent}{RGB}{0,150,214}
\definecolor{greenaccent}{RGB}{0,139,43}
\definecolor{purpleaccent}{RGB}{130,41,128}
\definecolor{orangeaccent}{RGB}{240,83,50}
\tikzstyle{topo_node}=[circle,draw=black,fill=green!50!black,thick,inner sep=1.2pt,scale=1.6]
\tikzstyle{topo_edge}=[draw=black,thick]

\begin{figure}[t]
	\centering
  \begin{subfigure}[t]{0.40\textwidth}
    \includegraphics[width=\textwidth]{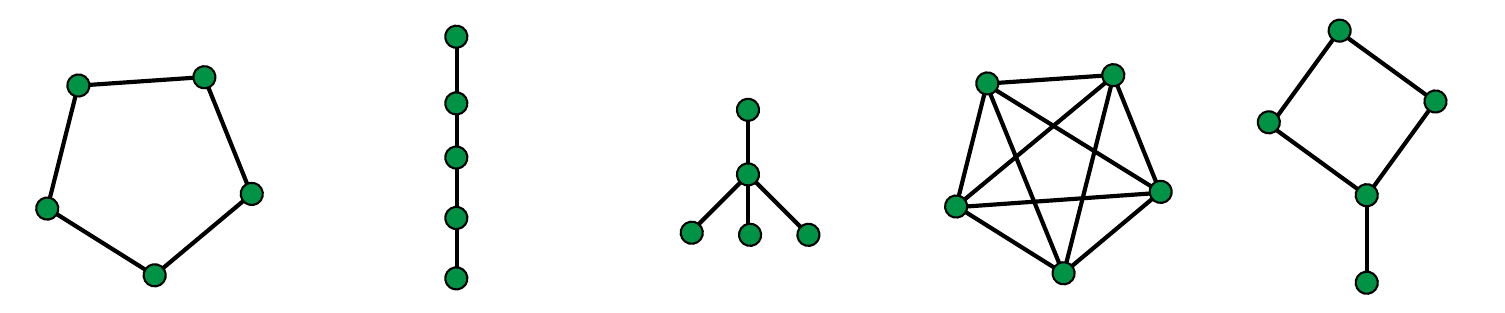}
  \end{subfigure}
  \begin{subfigure}[t]{0.40\textwidth}
    \begin{tikzpicture}
      \begin{axis}[
        width=\linewidth,
        ybar,
        bar width=5.5pt,
        ymin=0,
        ymax=100,
        width=1.0\linewidth,
        height=0.6\linewidth,
        ymajorgrids=true,
        grid style=dashed,
        enlarge x limits={abs=25pt},
        legend style={at={(0.38,0.95)},
        anchor=north,legend columns=-1, font=\small},
        ylabel={Time Percentage (\%)},
        y label style={at={(axis description cs:0.10,0.5)},anchor=south, font=\small},
        symbolic x coords={Ring,Line,Tree,Full,Kite},
        x label style={font=\tiny},
        xtick=data,
        cycle list={blueaccent,greenaccent,purpleaccent,orangeaccent}
      ]
        \pgfplotsinvokeforeach{Explore, Meet, Return}{
        \addplot+[draw=black,fill,] table[x=type,y=#1]{\mytable};
          \addlegendentry{#1}
        }
    \end{axis}
    \end{tikzpicture}
  \end{subfigure}
  \vspace{-2mm}
  \caption{
    Comparison of exploration efficiency (\textbf{Bottom})
    under different communication topologies (\textbf{Top}),
    for the small office in Sec.~\ref{sec:experiments}
    with $5$ robots and $T_\texttt{h}=150s$.
  }
  \label{fig:comm-topology}
  \vspace{-6mm}
\end{figure}

\subsubsection{Choice of Communication Topology}\label{subsec:topology}
Although the proposed strategy above is valid for any connected
communication topology~$\mathcal{G}$,
a fixed ring topology among the robots is adopted to ensure {a high}
efficiency of information propagation,
while a dynamic connection between the robots and the operator is allowed.
More specifically,
the resulting schedule of communication~$C_0$ in~\eqref{eq:round}
satisfies that~$n'_i=n_{i+1}$, $\forall n_i\in C_0$,
i.e., robot~$n'_i$ is simply the succeeding robot of~$n_i$,
and~$n_i$ is the preceding robot of~$n'_i$.
Whenever a return event is required by~\eqref{eq:select-p}
during the meeting event of two robots,
the preceding robot always returns to the base.
It can be shown that under this topology,
the data from any robot can be transmitted to any other robot
after maximum~$N$ meeting events.
More importantly,
this topology ensures that when two robots meet,
they would have obtained the information from \emph{all} previous meeting events.
This feature yields the ring topology particularly suitable for
the scenarios of human-robot interaction,
where the operator can be informed about the overall progress
of the whole fleet when a single robot returns.
Second,
this topology is essential for the~${Q}_2$ request,
due to the following two aspects:
(I) The calculation of feasible region~$\mathcal{P}_{\texttt{h}}$
in Def.~\ref{def:feasible} relies on the knowledge of
meeting events of {all} robots,
which can be easily obtained under the ring topology;
(II) After a~${Q}_2$ request is sent and the operator moves away,
the ring topology ensures that any other robot that is scheduled
to return can be informed about the new operator position,
before it actually returns.
Lastly, as shown in Fig.~\ref{fig:comm-topology}
and later in Sec.~\ref{sec:experiments},
the ring topology often leads to a higher rate of time spent on exploration
and lower rate on returning to the operator, compared
with other topologies.

\begin{figure}[t]
  \centering
  \includegraphics[width=0.95\linewidth]{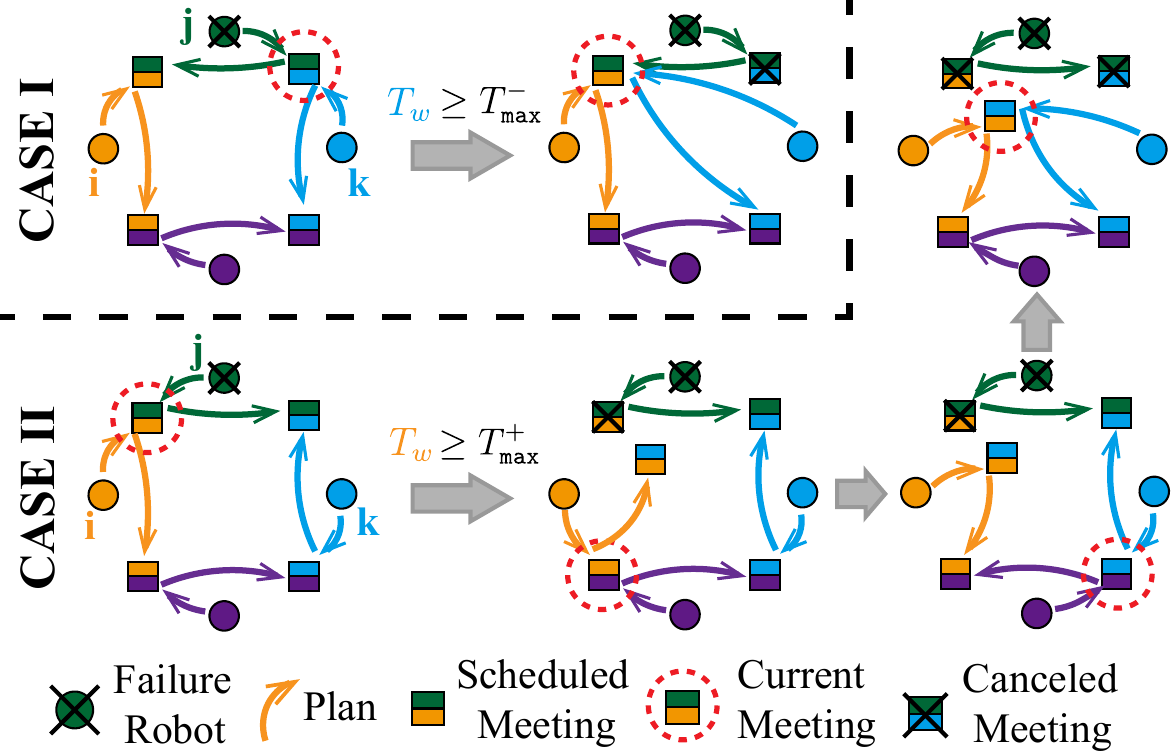}
  \vspace{-2mm}
  \caption{Illustration of two cases during
      the failure detection and recovery mechanism
      in Sec.~\ref{subsubsec:failure-recovery}.}
  \label{fig:failure-cases}
  \vspace{-3mm}
  \end{figure}

\subsubsection{Local Plan Adaptation}\label{subsubsec:local-plan}
During the online execution,
it is possible that a frontier that is assigned to one robot
is explored earlier by another robot on its way to other frontiers.
In addition, it might have generated additional frontiers
and there is sufficient time left before the next meeting event.
More importantly,
the preliminary plans~$\Gamma_i,\, \Gamma_j$ derived via Alg.~\ref{alg:opt-com}
may exceed the planned meeting time due to motion uncertainties,
which may lead to excessive waiting time and a potential violation of the~$Q_0$ request.
Therefore, it is important for robots to update their local plans as follows.
Each time when robot~$i\in \mathcal{N}$ reaches a frontier point,
it extracts the set of frontiers~$\mathcal{F}_t$
within its latest map,
among which the condition~$T_i^{\texttt{nav}}(p_i, f)
+T_i^{\texttt{nav}}(f, p_c)<t_c-t$
is checked for each~$f\in \mathcal{F}_t$,
with~$p_i$ being the current robot pose,
$p_c$ and~$t_c$ the pose and time of the next communication event.
If so, it ensures that robot~$i$ can reach $p_c$ before $t_c$ if it visits the frontier~$f$.
Then, the priority of remaining frontiers is measured by
$\chi(f)\triangleq
\omega_1 {\mathbf{mean}}_{f^-\in \mathcal{F}^-_i}
\{T_i^{\texttt{nav}}(f, f^-) \} -\omega_2
T_i^{\texttt{nav}}(p, f) -  \omega_3 {\mathbf{mean}}_{f^+\in \mathcal{F}_i}
\{T_i^{\texttt{nav}}(f, f^+)\}$,
where
$\mathcal{F}^-_i$ are frontiers allocated to other robots;
$\mathcal{F}_i$ are frontiers allocated to robot~$i$;
and~$\omega_1,\omega_2, \omega_3\geq 0$ are parameters.
Thus, frontiers are prioritized if they are close to the current robot pose
and the frontiers already assigned to robot~$i$,
and far from frontiers assigned to other robots.
The frontier with the highest priority is chosen as the next goal point.
If no frontiers satisfy this condition,
robot~$i$ moves to the next meeting event directly.

\subsubsection{Spontaneous Meeting Events}
As the navigation path is determined by the SLAM module in~\eqref{eq:slam},
it often occurs that robot~$i$ meets with another robot~$k$
on its way to the meeting event with robot~$j$,
which is called a \emph{spontaneous} meeting event.
In this case,
they exchange their local data~$D_i$ and~$D_k$ including merging the local maps.
However, they do not coordinate the next meeting event nor modify their local plans,
such that the communication topology~$\mathcal{G}$ can be maintained,
instead of growing into a full graph.

\subsubsection{Failure Detection and Recovery}\label{subsubsec:failure-recovery}
Robots can often fail when operating in extreme environments as considered in this work.
Thus, a failure detection and recovery mechanism is essential for the safety of the
functional robots.
The key is to ensure that the communication topology
can be dynamically adjusted to form a smaller ring after one robot fails,
still via only intermittent local communication.
Assume that robot~$j$ fails at time~$t_f>0$,
and its proceeding and succeeding neighbors in~$\mathcal{G}$
are robot~$i$ and robot~$k$, respectively.
As illustrated in Fig.~\ref{fig:failure-cases}, the mechanism can
be sorted into two cases:
(I) If robot~$j$ fails on its way to meet robot~$i$,
robot~$i$ will wait for robot~$j$ at the communication
location~$c_{ji}$ agreed from the previous round, for a waiting period of $T^-_{\texttt{max}}$.
Then, robot~$i$ assumes that robot~$j$ has failed and \emph{directly} moves to
the meeting location~$c_{jk}$ of robots~$j$ and~$k$,
which has been propagated to robot~$i$ via previous communication events.
During communication, robots~$i$ and $k$ plans for their next meeting event
as described previously;
(II) If robot~$j$ fails \emph{before} meeting with robot~$k$,
robot~$k$ detects the failure of robot~$j$ in the same way as robot~$i$
does in the first case, with a waiting time~$T^+_{\texttt{max}}$.
However, robot~$k$ in this case cannot directly meet with robot~$i$,
as robot~$i$ has already communicated with robot~$j$ and most likely performs exploration.
Moreover, the meeting event between robot~$i$ and its predecessor is unknown to robot~$k$
due to the failure of robot~$j$.
Therefore, robot~$k$ first determines a meeting location $p_{ki}$ locally,
and then communicates with its successor to propagate this information.
After several communication events, this location is propagated to robot~$i$
along with the message that robot~$j$ has failed.
Thus, robot~$i$ cancels the meeting event with robot~$j$ and heads to $p_{ki}$
to meet with robot~$k$ instead.
In both cases,  the communication topology~$\mathcal{G}$ is adjusted to form a smaller ring,
by removing the failed robot.
It is worth noting that $T^+_{\texttt{max}}$ should be larger than $T^-_{\texttt{max}}$,
such that robot~$i$ can meet with robot~$k$ in case (I),
before robot~$k$ judges that robot~$j$ has failed and moves away.

\subsubsection{Time-varying Latency Constraint}\label{subsubsec:dynamic-Th}
During online execution,
the operator may require different frequency of status updates
according to the current progress or other factors.
Therefore, the proposed framework allows the operator to
dynamically adjust the constraint~$T_{\texttt{h}}$ as the~$Q_0$ request,
i.e., by specifying a new threshold~$T'_{\texttt{h}}$ for the future rounds
when a robot returns.
Thus, this new threshold is propagated to the fleet and is reflected
automatically in the subsequent intermittent communication events.


\begin{figure}[t]
  \centering
  \begin{subfigure}[b]{0.15\textwidth}
      \includegraphics[width=\textwidth]{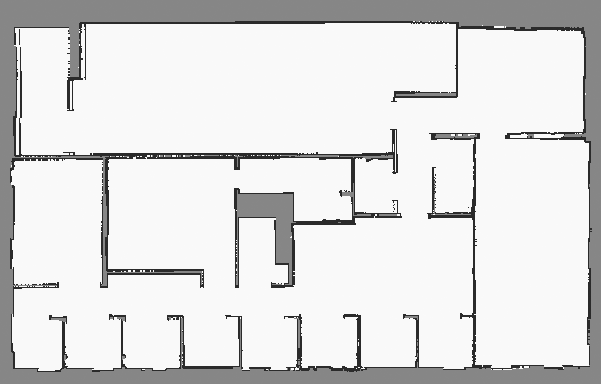}
      \label{fig:office_env}
  \end{subfigure}
  \begin{subfigure}[b]{0.15\textwidth}
      \includegraphics[width=\textwidth]{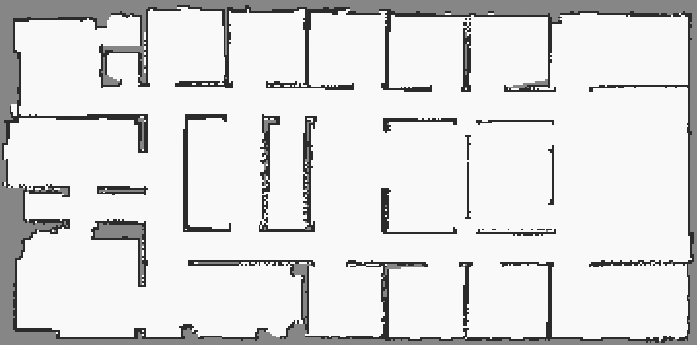}
      \label{fig:forest_env}
  \end{subfigure}
  \begin{subfigure}[b]{0.15\textwidth}
      \includegraphics[width=\textwidth]{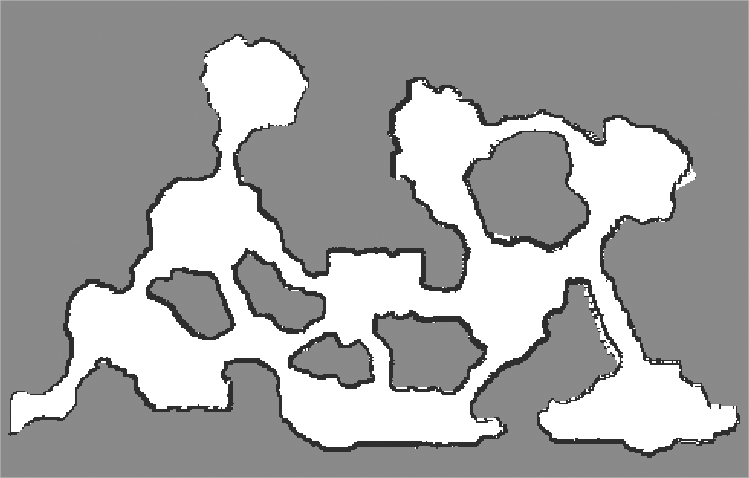}
      \label{fig:cave_env}
  \end{subfigure}
  \vspace{-4mm}
  \caption{Test environments:
  large office with three separated long corridors (\textbf{Left}),
  small office with more connected rooms (\textbf{Middle}),
  and subterranean cave (\textbf{Right}).}
  \label{fig:env}
  \vspace{-3mm}
  \end{figure}

\section{Numerical Experiments} \label{sec:experiments}
To further validate the proposed method,
numerical simulations and hardware experiments
are presented in this section.
The proposed method is implemented in \texttt{Python3}
within the framework of \texttt{ROS},
and tested on a laptop with an Intel Core i7-1280P CPU.
The operator interacts with the robotic fleet via an Android tablet.
Simulation and experiment videos can be found in the supplementary material.

\begin{figure}[t]
  \centering
  \includegraphics[width=0.9\linewidth]{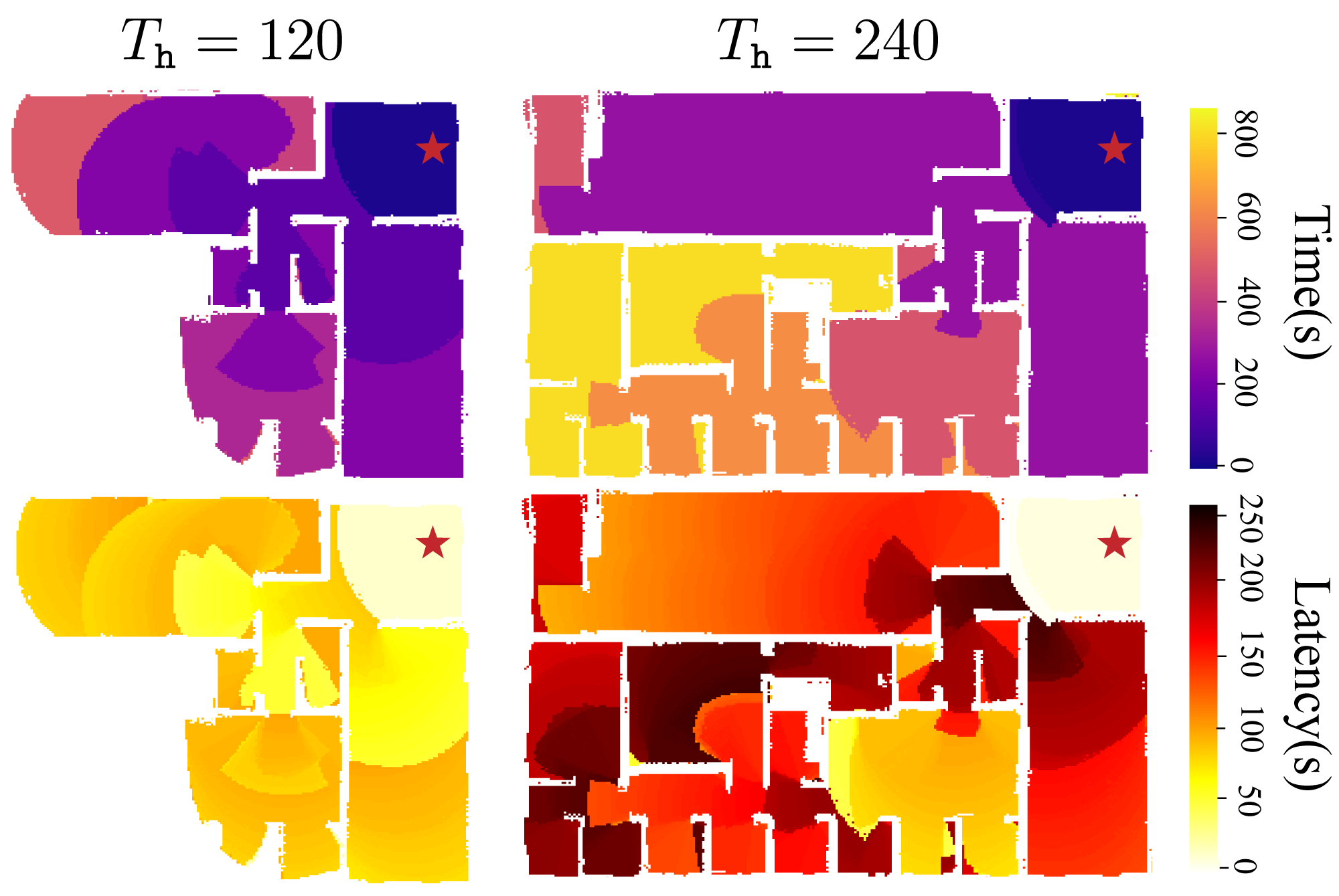}

  \includegraphics[width=0.9\linewidth]{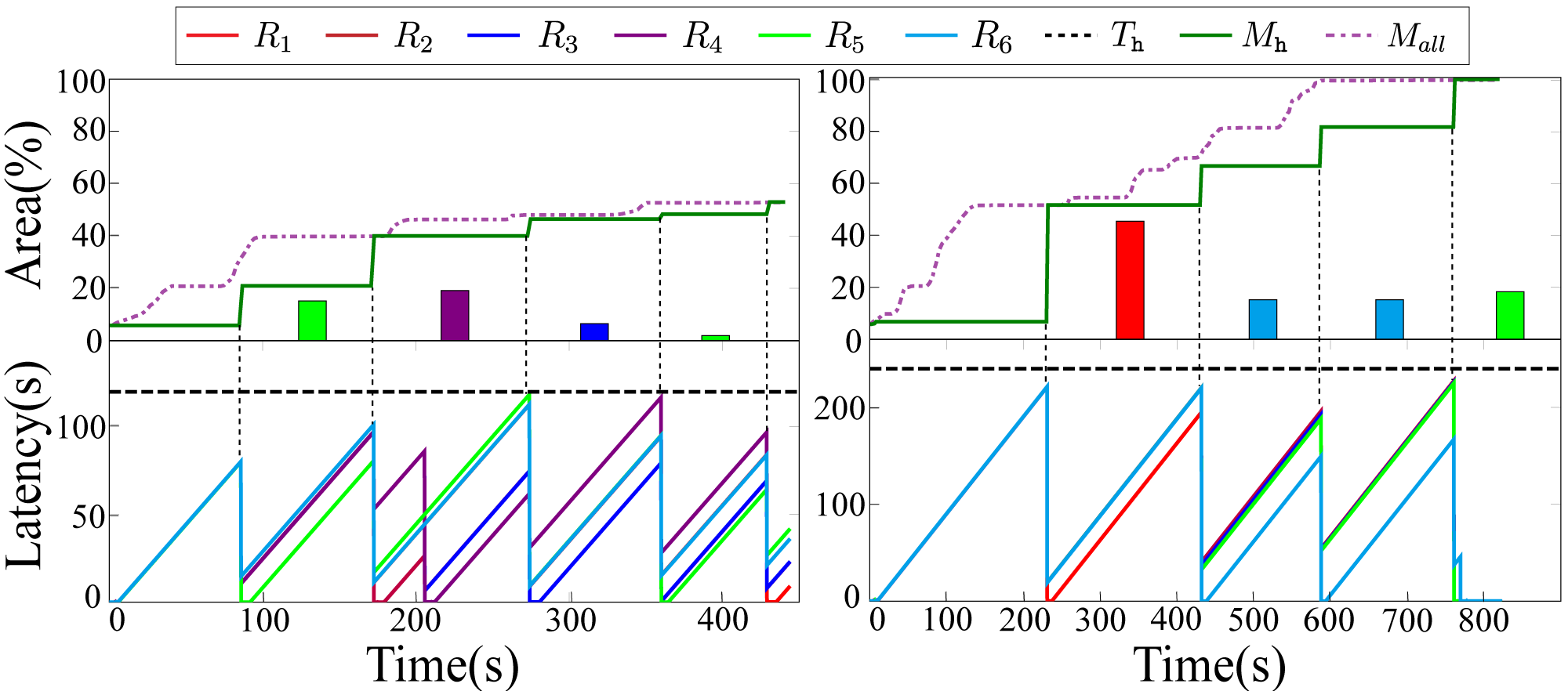}
  \caption{Simulation results of $6$ robots exploring
  a $50m\times30m$ office,
  when the latency constraint~$T_{\texttt{h}}$ is set
  to $120s$ (\textbf{Left}) and $240s$ (\textbf{Right}).
  \textbf{Top}: the final map at the operator (marked by the red star),
  color-filled by the time of being received;
  \textbf{Middle}: the latency of different parts,
  measured by the time difference between being explored
  and being received by the operator;
  \textbf{Bottom}: evolution of the union of
  all explored maps~$M_{\texttt{all}}$,
  the local map~$M_{\texttt{h}}$ of operator,
  the contribution of return robots,
  and the latency of each robot in~\eqref{eq:frequency}.
  }\label{fig:exp_1}
  \vspace{-7mm}
\end{figure}

\subsection{System Description}\label{subsec:description}

The robotic fleet consists of 6 differential-driven UGVs,
which is simulated in the \texttt{Stage} simulator
for three different environments as shown in Fig.~\ref{fig:env}:
(i) a large office of size $50m\times 30m$, with three separated long corridors;
(ii) a smaller but more connected office of size $40m\times 20m$;
(iii) a subterranean cave of size $50m\times 30m$.
A 2D occupancy grid map is generated via the $\texttt{gmapping}$ SLAM package to generate local map,
with a sensor range of $8m$,
which provides a probabilistic
representation of the environment by assigning values to each cell,
i.e., whether the cell has been explored, free or occupied
by obstacles~\cite{moravec1985high}.
Each robot navigates using the ROS navigation stack \texttt{move\_base},
with a maximum linear velocity of $0.5m/s$ and angular velocity of $0.8rad/s$.

Moreover,
two robots can only communicate with each other
if they are within a communication range of $3.5m$
and have a line of sight,
the same between robots and the operator.
The map merge during communication is handled
by the off-the-shelf ROS package $\texttt{multirobot\_map\_merge}$.
Lastly,
the operator interacts with the robotic fleet through the terminal
and the \texttt{Rviz} interface.
Namely,
the operator can send a request by
first specifying its type ($Q_1$ or $Q_2$)
in the terminal;
and then either select the vertices of prioritized region
with the $\texttt{Publish\_Point}$ tool,
or choose the next desired location with the $\texttt{2D\_Nav\_Goal}$ tool.

\subsection{Results}\label{subsec:results}
The results present:
(i) how the proposed method performs under different environments and latency constraints;
(ii) how the human-robot interactions (i.e., $Q_1$ and $Q_2$ requests)
are handled during the exploration process.

\subsubsection{Latency-constrained Exploration}\label{subsubsec:latency}
As shown in Fig.~\ref{fig:exp_1},
the operator is located at the top-right corner of the
the first large office,
which is also where all robots start.
The latency constraint~$T_\texttt{h}$ is set to~$120s$
and~$240s$ for two cases.
It can be seen that the fleet can not
explore the whole environment when $T_\texttt{h}=120s$
with only $53\%$ coverage,
while the whole map is obtained when $T_\texttt{h}=240s$.
Moreover, the latency constraint is satisfied at all time in both cases,
with the maximum latency among all robots being~$118s$ and $227s$, respectively.
In other words,
the explored map is updated to operator in time for both cases,
and a smaller $T_\texttt{h}$ leads to much more frequent updates.
Specifically, when $T_\texttt{h}=120s$,
the return events occur at $87s$, $173s$, $207s$, $275s$, $361s$ and $431s$,
during which the newly-explored map is increased by~$15\%$, $19\%$, $0\%$, $6\%$, $2\%$
and $4\%$, respectively.
Thus, the efficiency quickly decreases as the explored area extends,
i.e., most of the time is spent on navigation and communication,
rather than exploration.
Similar phenomenon can be observed when $T_\texttt{h}=240s$,
where the operator map is increased by~$45\%$, $15\%$, $15\%$, $18\%$
in four return events.
Lastly, it is worth noting that although the whole map is obtained
when $T_\texttt{h}=240s$,
the operator can not obtain a timely update of the fleet status
due to the large latency,
i.e., only at $232s$, $432s$, $588s$ and $762s$.
The left-bottom area is the latest to be explored at $592s$.
This further validates that simply increasing $T_\texttt{h}$
can not guarantee a timely update of the fleet status,
thus movement of operator is essential.
\begin{figure}[t]
    \centering
    \includegraphics[width=0.9\linewidth]{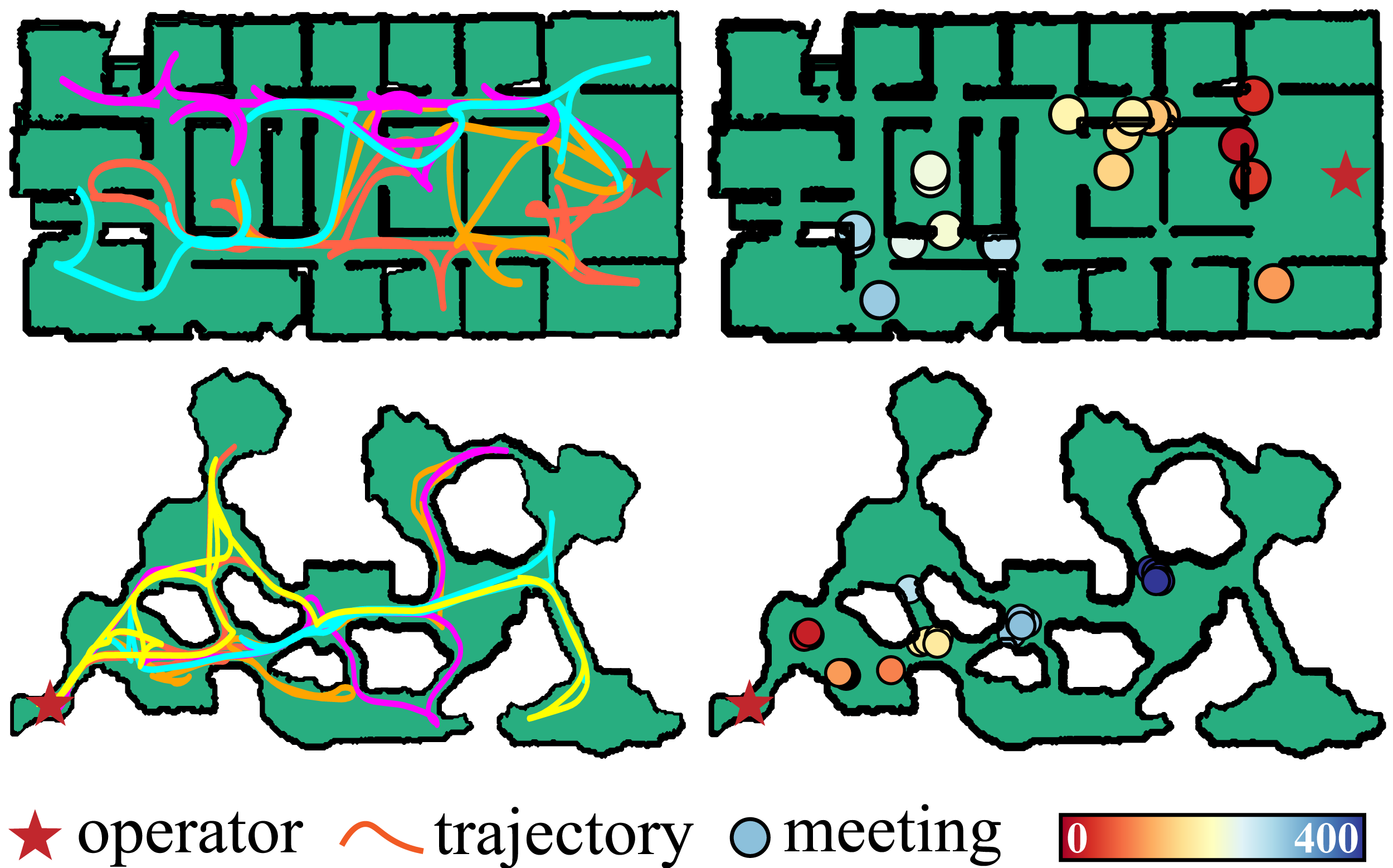}
    \caption{
    Robot trajectories within the final map (\textbf{Left})s
    and the communication points (\textbf{Right})
    in the small office (\textbf{Top}) and the cave (\textbf{Bottom}),
    with $T_\texttt{h}=150s$ and $T_\texttt{h}=200s$.
    Meeting points are color-filled by the
    meeting time (red to blue).}
    \label{fig:office_cave}
    \vspace{-6mm}
    \end{figure}

In addition,
the second small office is tested with $4$ robots and $T_\texttt{h}=150s$,
while the cave environment is tested with $5$ robots and $T_\texttt{h}=200s$.
The final trajectories and communication points are shown in Fig.~\ref{fig:office_cave}.
It can be seen that the whole map is explored for both scenes,
at time $243s$ and $395s$, respectively.
Different from the first large office,
the communication points gradually move forward as the exploration proceeds,
as the robots often navigate a short distance to reach the meeting points,
yielding more time for exploration.
It indicates that the proposed method is particularly effective
for a more connected environment.

\subsubsection{Human-robot Interaction}\label{subsubsec:interaction}
As the operator may interact with the fleet via three
different types of requests,
this section evaluates how the proposed method
handles these interactions during exploration.

\begin{figure}[t]
  \centering
  \includegraphics[width=1.0\linewidth]{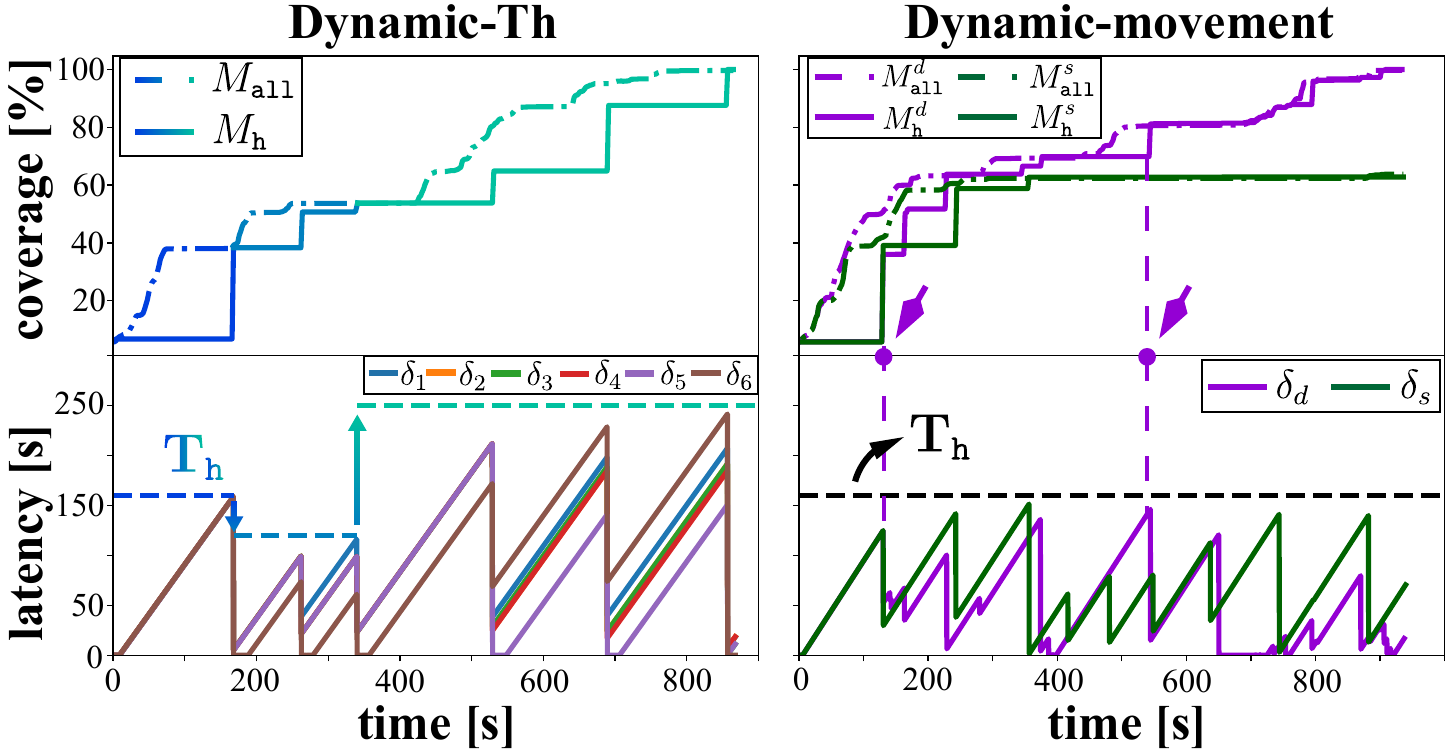}
  \caption{
    \textbf{Left}: Evolution of the explored area $M_\texttt{all}$,
    the local map of the operator~$M_\texttt{h}$,
    and the latency of each robot,
    when $T_\texttt{h}$ changes dynamically;
    \textbf{Right}: Comparison of the exploration efficiency
    when the operator is static (green) or dynamic (purple) (\textbf{Top}),
    and evolution of the largest latency over all robots (\textbf{Bottom}),
    with the arrows indicating when the operator moves.
  }\label{fig:compare_move}
  \vspace{-4mm}
\end{figure}

(I) \textbf{Time-varying latency constraint in~$Q_0$}.
To begin with,
regarding the~$Q_0$ requests where the operator changes $T_\texttt{h}$,
the large office is tested with~$6$ robots and an initial $T_\texttt{h}$ of $160s$.
The results are summarized in Fig.~\ref{fig:compare_move},
which show that $T_\texttt{h}$ is changed twice.
At $t=158s$, the operator decreases $T_\texttt{h}$ to~$120s$,
yielding more frequent updates and smaller exploration efficiency;
while at $t=330s$, $T_\texttt{h}$ is increased to~$250s$,
after which exploration efficiency quickly increases
with less frequent return events.
Lastly, the whole map is obtained at~$860s$,
during which the maximum latency of all robots
never exceeds the bound~$T_\texttt{h}$.

\begin{figure}[t]
  \centering
  \includegraphics[width=0.9\linewidth]{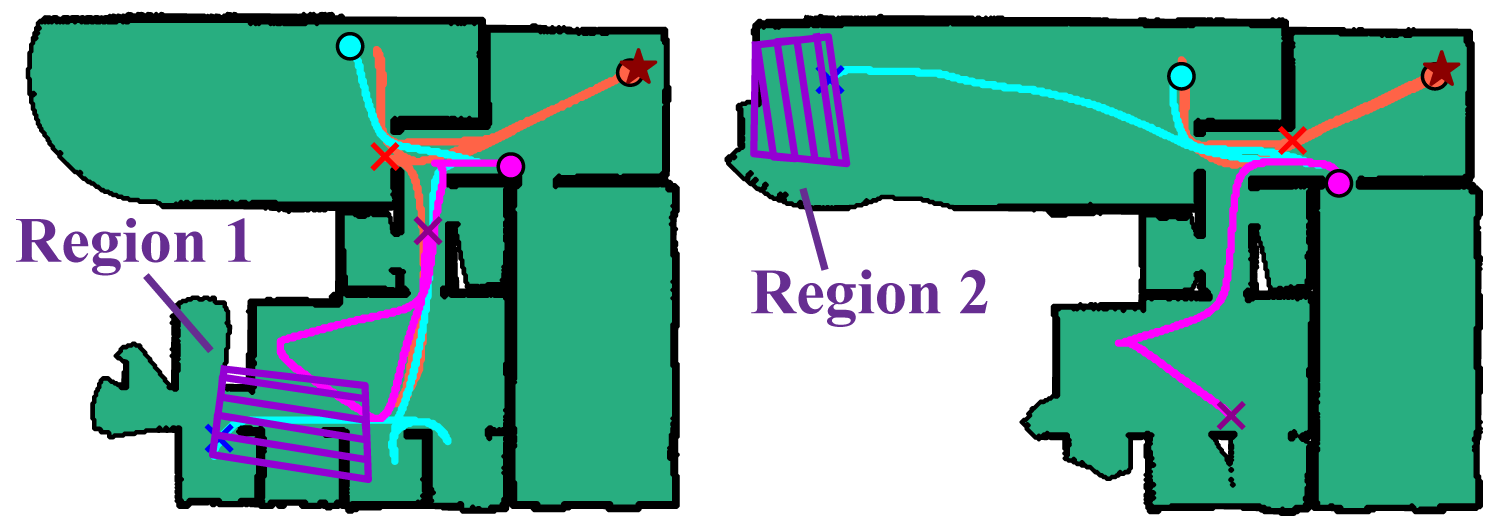}
  \caption{Comparison of the exploration results
  when different operators specify different~$Q_1$ requests
  as prioritized regions (the purple polygon).
  The trajectories (colored lines) start
  from the communication points (filled circles),
  and end when the prioritized region is fully explored (crosses).}
  \label{fig:prioritize}
  \vspace{-3mm}
  \end{figure}

{(II) \textbf{Prioritized region in~$Q_1$}.
Regarding the~$Q_1$ requests where prioritized regions are specified online,
tests are conducted for the large office environment
with $3$ robot number and $T_\texttt{h}=160s$.
A small number of robots is adopted as
this type of requests is useful when there are not enough robots to
explore every branche simultaneously.
Specifically,
the first operator specifies the prioritized region
located at the bottom-left (Region~1) at the first return event;
while the second operator chooses top-left (Region~2),
as shown in Fig.~\ref{fig:prioritize}.
In the first case,
robot $0$ returns at $134.3s$ and receives the request to explore region 1 first,
which is then forwarded to robot $1$ and $2$ at $175.9s$ and $207.4s$.
Then, their updated local plans take into account the prioritized region,
which results in communication points near that region.
Thus, instead of exploring the left branch first,
all the robots first turn to the bottom branch
and explore the prioritized region at $340.8s$.
In contrast for the second case,
the result is known to three robots at $131.1s$, $172.2s$ and $203.0s$ respectively.
Different from the first case,
with the updated plans one robot heads for region~$2$,
one robot returns, and
the other explores the bottom-left branch
as one robot is enough to explore region $2$ at $267.6s$.
Note that in both cases,
it is ensured that there is at least one robot assigned to explore the prioritized region,
even when there are numerous frontiers and only a small number of robots.
Last but not least,
region $2$ is finally explored at $520.8s$ for the first case,
while region $1$ is explored at $501.6s$ for the second case.}

\begin{figure}[t]
  \centering
  \includegraphics[width=\linewidth]{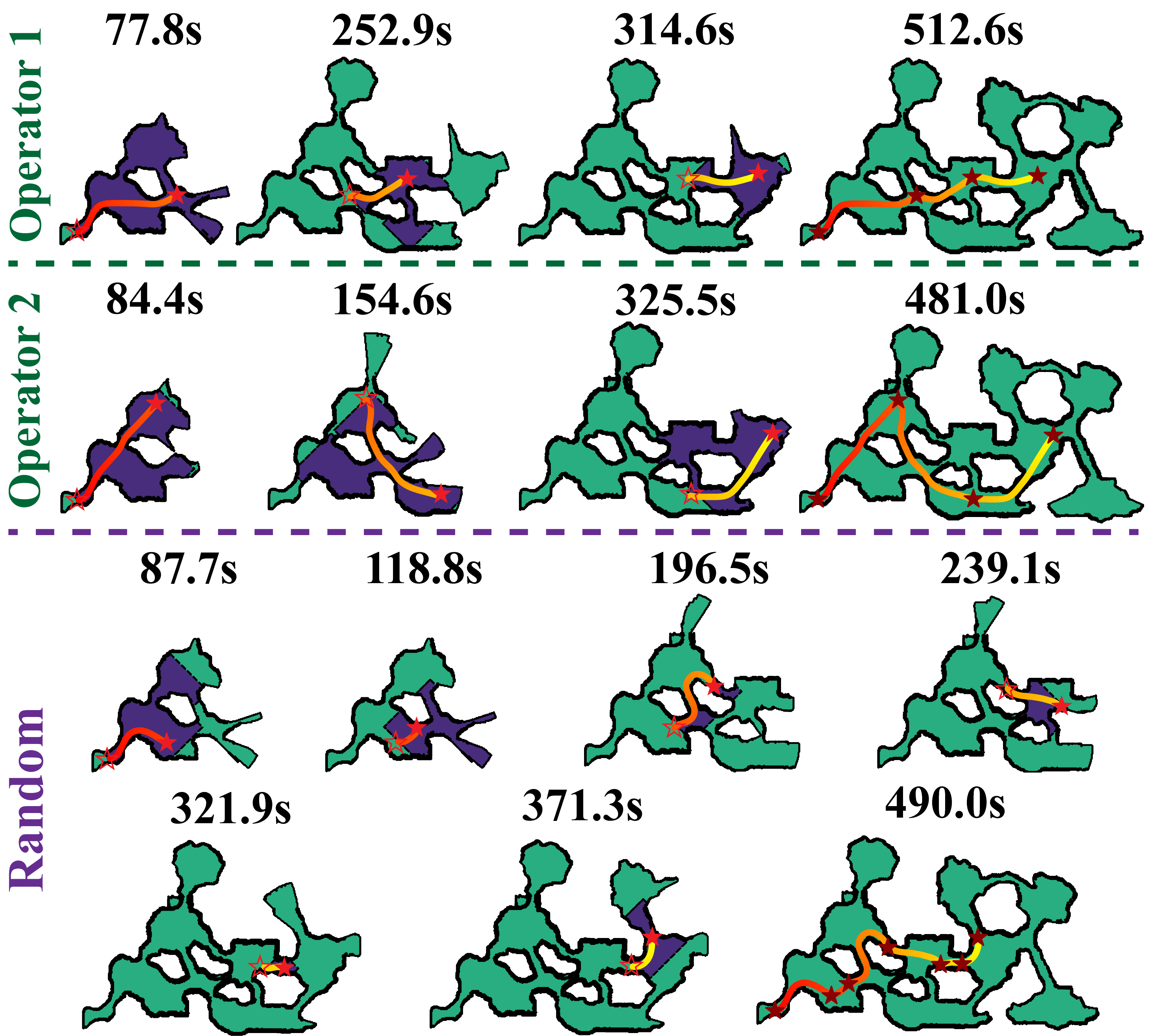}
  \caption{
  Results of how the proposed method reacts to different human behaviors
  of dynamic movement:
  two operators (\textbf{Top}) and a random movement (\textbf{Bottom}).
  Human movements are shown as colored lines with intermediate goal positions (red stars)
  and feasible regions for each movement (purple area). }
  \label{fig:cave_move}
  \vspace{-4mm}
  \end{figure}

(III) \textbf{Dynamic movement of operator in~$Q_2$}.
Regarding the~$Q_2$ requests where the operator
moves to a desired location,
the large office is tested with~$6$ robots and $T_\texttt{h}=160s$.
The results are summarized in Fig.~\ref{fig:human_move_fig} and Fig.~\ref{fig:compare_move}.
It can be seen that the latency constraint is satisfied at all time
even when the operator moves dynamically,
with the maximum latency being~$146.2s$.
Compared with the case when the operator is static,
the complete map is obtained at round~$900s$.
As shown in Fig.~\ref{fig:human_move_fig},
the right-bottom area has been fully explored at~$136s$,
while the left part remains unknown.
The operator requests to move to the left part,
of which the feasible region is shown in purple
and the intermediate goal point is marked by the red star.
Thus, the operator moves left,
then followed by another request to move to the bottom left part
at $552s$.
It can be noted that
the maximum latency in the dynamic case
changes more irregularly, as shown in Fig.~\ref{fig:compare_move}.
This is because the robots can communicate with the operator
unexpectedly, yielding a decrease in latency.
Notably, the second movement is essential to increase
the explored area from~$70\%$ to $100\%$.

\usetikzlibrary{patterns}
\pgfplotstableread[col sep=comma]{
	type,     Mapmerge, Frontier, Local-opt, Find-comm, Adapt, M+,     M-,     F+,    F-,    L+,    L-,    C+,   C-,   A+,    A-
	Cave,     0.0070,   0.07,     0.0019,    0.18,      0.10,  0.0035, 0.0035, 0.046, 0.046, 0.001, 0.001, 0.10, 0.10, 0.03,  0.03
	L-office, 0.0074,   0.12,     0.0055,    0.23,      0.14,  0.0029, 0.0029, 0.048, 0.048, 0.004, 0.004, 0.14, 0.14, 0.10,  0.10
	S-office, 0.0039,   0.15,     0.0006,    0.14,      0.081, 0.0008, 0.0008, 0.100, 0.100, 0.0003, 0.0003, 0.08, 0.08, 0.03,  0.03
	}\mytable
\pgfplotstableread[col sep=comma]{
	type, Mapmerge, Frontier, Local-opt, Find-comm, Adapt, M+,     M-,     F+,    F-,    L+,      L-,      C+,    C-,    A+,     A-
	N=5,  0.0073,   0.111,    0.0003,    0.10,      0.05,  0.0037, 0.0037, 0.061, 0.061, 0.00005, 0.00005, 0.043, 0.043, 0.019,  0.019
	N=10, 0.0068,   0.127,    0.0003,    0.135,     0.075, 0.0025, 0.0025, 0.057, 0.057, 0.00005, 0.00005, 0.065, 0.065, 0.030,  0.030
	N=15, 0.0082,   0.245,    0.0004,    0.139,     0.105, 0.0048, 0.0048, 0.100, 0.100, 0.00005, 0.00005, 0.080, 0.080, 0.042,  0.042
	}\Ntable
\definecolor{applegreen}{rgb}{0.55, 0.71, 0.0}
\definecolor{ao(english)}{rgb}{0.0, 0.5, 0.0}
\definecolor{v0}{rgb}{0.267968 0.223549 0.512008}
\definecolor{v1}{rgb}{0.190631 0.407061 0.556089}
\definecolor{v2}{rgb}{0.127568 0.566949 0.550556}
\definecolor{v3}{rgb}{0.20803  0.718701 0.472873}
\definecolor{v4}{rgb}{0.565498 0.84243  0.262877}

\pgfplotsset{/pgfplots/error bars/error bar style={thick, black}}
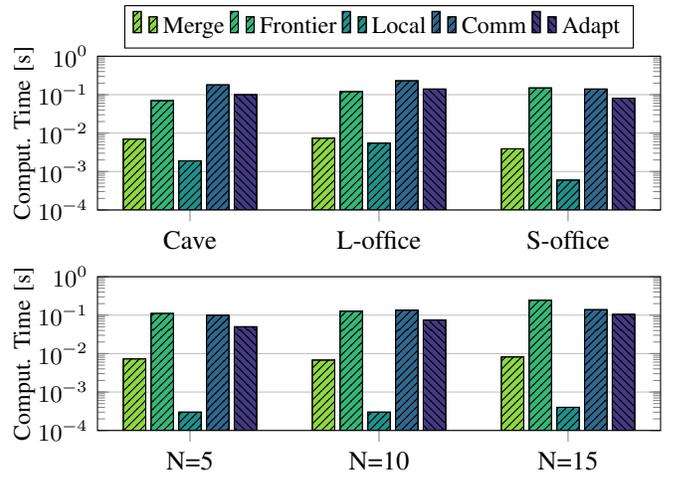
\begin{figure}[t]
	\centering
  \begin{subfigure}[t]{0.50\textwidth}
    \begin{tikzpicture}
      \begin{semilogyaxis}[
        width=\linewidth,
        ybar,
        bar width=8.5pt,
        log origin = infty,
        ymin=0.0001,
        ymax=1,
        xtick pos=bottom,
        ytick={0.0001,0.001,0.01, 0.1, 1},
        yticklabel style = {font=\small},
        minor y tick num=9,
        width=1.0\linewidth,
        height=0.40\linewidth,
        ymajorgrids=true,
        enlarge x limits={abs=35pt},
        legend style={at={(0.50,1.05)},
        anchor=south,legend columns=5, font=\small},
        ylabel={Comput. Time [s]},
        y label style={at={(axis description cs:0.07,0.5)},anchor=south, font=\small},
        symbolic x coords={Cave, L-office, S-office},
        x label style={font=\tiny},
        xtick=data,
        cycle list={v4, v3, v2, v1, v0},
      ]
          \addplot+[draw=black,fill,postaction={
            pattern=north east lines
        }] table[x=type,y=Mapmerge]{\mytable};
          \addlegendentry{Merge}

          \addplot+[draw=black,fill,postaction={
            pattern=north east lines
        }] table[x=type,y=Frontier]{\mytable};
          \addlegendentry{Frontier}

          \addplot+[draw=black,fill,postaction={
            pattern=north east lines
        }] table[x=type,y=Local-opt]{\mytable};
          \addlegendentry{Local}

          \addplot+[draw=black,fill,postaction={
            pattern=north east lines
        }] table[x=type,y=Find-comm]{\mytable};
          \addlegendentry{Comm}

          \addplot+[draw=black,fill,postaction={
            pattern=north west lines
        }] table[x=type,y=Adapt]{\mytable};
          \addlegendentry{Adapt}

    \end{semilogyaxis}
    \end{tikzpicture}
  \end{subfigure}

  \begin{subfigure}[t]{0.50\textwidth}
    \begin{tikzpicture}
      \begin{semilogyaxis}[
        width=\linewidth,
        ybar,
        bar width=8.5pt,
        log origin = infty,
        ymin=0.0001,
        ymax=1,
        xtick pos=bottom,
        ytick={0.0001,0.001,0.01, 0.1, 1},
        yticklabel style = {font=\small},
        minor y tick num=9,
        width=1.0\linewidth,
        height=0.40\linewidth,
        ymajorgrids=true,
        enlarge x limits={abs=35pt},
        ylabel={Comput. Time [s]},
        y label style={at={(axis description cs:0.07,0.5)},anchor=south, font=\small},
        symbolic x coords={N=5, N=10, N=15},
        x label style={font=\tiny},
        xtick=data,
        cycle list={v4, v3, v2, v1, v0},
      ]
          \addplot+[draw=black,fill,postaction={
            pattern=north east lines
        }] table[x=type,y=Mapmerge]{\Ntable};

          \addplot+[draw=black,fill,postaction={
            pattern=north east lines
        }] table[x=type,y=Frontier]{\Ntable};

          \addplot+[draw=black,fill,postaction={
            pattern=north east lines
        }] table[x=type,y=Local-opt]{\Ntable};

          \addplot+[draw=black,fill,postaction={
            pattern=north east lines
        }] table[x=type,y=Find-comm]{\Ntable};

          \addplot+[draw=black,fill,postaction={
            pattern=north west lines
        }] table[x=type,y=Adapt]{\Ntable};

    \end{semilogyaxis}
    \end{tikzpicture}
  \end{subfigure}
  \vspace{-4mm}
  \caption{
    The computation time of each step within the proposed method:
    for three scenarios (\textbf{Top})
    and with different fleet sizes in the cave scenario (\textbf{Bottom}).
  }
  \label{fig:efficiency}
  \vspace{-4mm}
\end{figure}


(IV) \textbf{Different human behaviors}.
Last but not least, it is worth investigating how the proposed method
reacts to \emph{different} human behaviors.
Particularly, consider the dynamic movement of the operator in the cave environment
with~$5$ robots and $T_\texttt{h}=100s$.
Fig.~\ref{fig:cave_move} summarizes the results of
{three} different human behaviors:
two operators choose different intermediate waypoints within the feasible region;
and a random movement is also tested for comparison.
Compared with only $59\%$ coverage in the static case,
the full map is obtained by both operators via~$3$ consecutive movements,
at~$512s$ and~$481s$, respectively.
Although two operators move in different paths,
it is interesting to observe that they both are inclined
to move towards the unexplored area,
which is beneficial for subsequent exploration.
Moreover, it can be seen that the whole map is obtained at~$490s$
even when the operator moves randomly,
however with more movements ($6$ vs. $3$).
This can be explained by the definition of the feasible region,
which indeed serves as a {guidance} for the operator.
In fact, a new position is feasible only if it is closer to communication
points than the current position.
Since the communication points are usually near the frontiers,
the operator is more likely to move towards the unexplored area
when moving in the feasible region.
In other words, the proposed method remains effective
under different human behaviors.

\subsubsection{Complexity and Scalability Analysis}
\label{subsubsec:complexity}
Due to its fully-distributed nature,
the computation time of the proposed method mainly consists of two parts:
(I) coordination during pairwise intermittent communication,
including map merging, frontier generation, local plan optimization,
and selection of next communication point;
(II) local plan adaptation during local exploration when a frontier is reached.
To begin with,
the computation time of each step above is analyzed in different scenarios,
where four robots are deployed in three scenarios from Fig.~\ref{fig:env}.
As summarized in Fig.~\ref{fig:efficiency},
both steps of map merging and local optimization take less than $0.01s$,
while the generation of frontiers takes on average around~$0.1s$,
which is also the major cause of computation in the plan adaptation.
Moreover, the selection of communication points takes around~$0.1s$
due to the iterative search process as detailed in Alg.~\ref{alg:opt-com}.
Thus, the overall computation time of each meeting event
is less than~$0.5s$ on average, which is well suited for the event-based coordination.
Second, the same analysis is performed for different fleet sizes,
i.e., $5$, $10$ and $15$ in the cave scenario.
It can be seen from Fig.~\ref{fig:efficiency} that
the computation time of different steps almost remains the same when robot number increases,
which aligns with the distributed architecture.
This indicates that the proposed method holds promise for
potential application to an even larger fleet of robots.
\begin{figure}[t]
  \centering
  \includegraphics[width=1.0\linewidth]{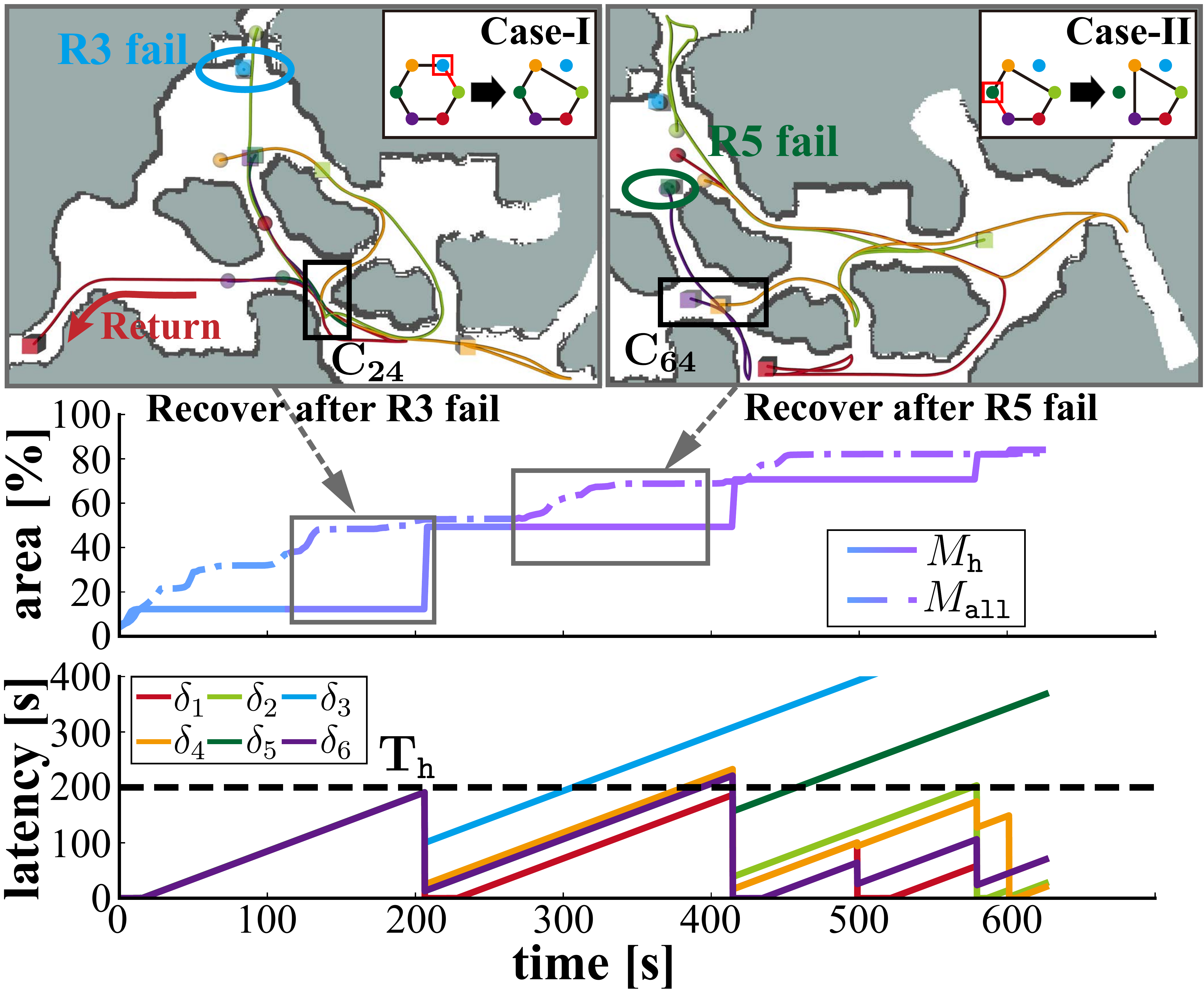}
  \caption{
    Results for failure detection and recovery:
    robot trajectories (\textbf{Top}) after the failures of robots~$3$ (in blue) and~$5$ (in green),
    with new meeting points for recovery (black square),
    and the new communication topology (upper-right corner);
    the progress of exploration (\textbf{Middle});
    the evolution of local latency for each robot (\textbf{Bottom}).
  }
  \label{fig:failure}
  \vspace{-4mm}
  \end{figure}

\subsubsection{Robot Failure}\label{subsubsec:failure}
As described in Sec.~\ref{subsubsec:failure-recovery},
the proposed method can detect and recover from potential robot failures
during execution. To validate this,
consider the cave scenario with~$6$ robots and~$T_\texttt{h}=200s$,
of which two robots fail consecutively online.
Particularly, the parameters~$T^-_\texttt{max}$ and $T^+_\texttt{max}$
are set to $20s$ and $40s$, respectively.
The final trajectories, explored area, and the latency of each robot over time
are shown in Fig.~\ref{fig:failure}.
Robot~$3$ fails at~$t=110s$ \emph{before} meeting with robot~$2$,
which belongs to Case-I in Fig.~\ref{fig:failure}.
This failure is detected by robot~$2$ at $t=171.8s$ after waiting for~$20s$.
Then, robot~$2$ directly communicates with the succeeding robot of $3$,
namely robot~$4$,
thus the topology is adjusted by excluding robot~$3$.
Furthermore,
robot~$5$ fails at $t=263s$ before communicating with robot~$6$,
which belongs to Case-II in Fig.~\ref{fig:failure}.
Consequently, robot~$6$ detects the failure at~$t=305s$,
after which it calculates a meeting point~$p_{64}$,
and then departs for robot~$1$.
After three rounds of local communication,
this information is propagated to robot $2$, $3$ and finally~$4$.
Robot~$4$ then heads for $p_{64}$ and meets with robot~$6$
at~$t=398s$, after which the recovery is completed.
Note that the communication topology has reduced to a ring of three nodes,
excluding robots~$3$ and~$5$.
The overall exploration task is accomplished at~$t=624s$,
with only a slight decrease in efficiency.
Lastly,
it is worth noting that the latency of functional robots may slightly exceed the threshold
due to additional waiting time for failure detection,
which is difficult to predict beforehand.
Nonetheless, the latency constraint is satisfied again once the topology is recovered through
subsequent coordination.

\begin{figure}[t!]
  \centering
  \includegraphics[width=1.0\linewidth]{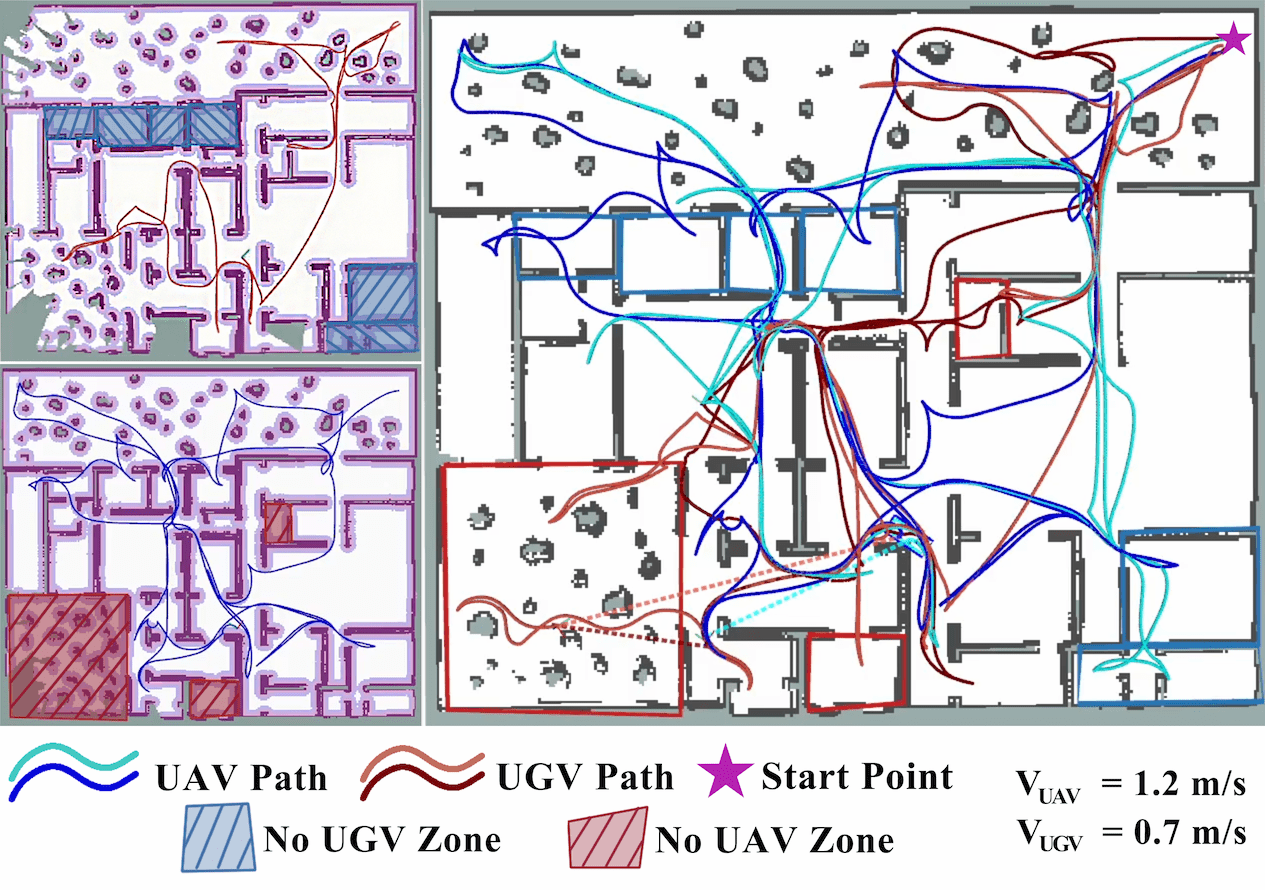}
  \caption{
    Collaborative exploration via a heterogeneous fleet of~$2$ UAVs and~$2$ UGVs.
    \textbf{Left}: Trajectory and navigation costmap
    of one UGV (Top) and one UAV (Bottom);
    \textbf{Right}: Final trajectories of all robots within the explored map.
  }
  \label{fig:hetero}
  \vspace{-5mm}
  \end{figure}

\subsubsection{Heterogeneous Fleets}\label{subsubsec:heter}
The proposed method is applied to a fleet of~$2$ UAVs and~$2$ UGVs
with a maximum velocity of $1.2m/s$ and $0.7m/s$, respectively,
to explore a large building complex with some forest area around.
There are regions that are only accessible by UAVs
(e.g., rooms with open windows and closed doors, marked by blue),
and regions that are only accessible by UGVs (marked by red).
The operator and all robots start from the top-right corner of the map,
and  $T_\texttt{h}$ is set to~$200s$.
It can be seen that the UAVs have longer trajectories than UGVs
($(493m, 446m)$ vs. $(316m,373m)$), and explore more area,
due to their higher velocity.
Moreover, it is interesting to note that UAVs and UGVs
exhibit different patterns of exploration due to different traversability.
More specifically,
after all robots have entered the building,
the UAVs directly navigate through the top-left \emph{windows}
to explore the forest area,
while it takes longer for the UGVs to reach the same area
through the top-right door.
Consequently,
UAVs are assigned more frontiers to explore in the forest area,
and UGVs mainly focus on the interior of the building.
In other words,
different capabilities of robots are fully utilized during exploration.
Lastly, the operator moves into the building at~$t=371s$,
and the whole map is explored at $t=1080s$.

\subsection{Comparisons}\label{subsec:comparisons}
To further validate the effectiveness of the proposed framework (as \textbf{iHERO}),
a quantitative comparison is conducted against \textbf{four} baselines:

(i) \textbf{BA-IND}, which is based on the distributed exploration framework
in~\citet{burgard2005coordinated},
where each robot explores independently without active communication.
To ensure the latency constraint,
a modification is added such that each robot returns to the operator
before its latency is predicted to be exceeding the bound;

(ii) \textbf{BA-SUB}, which is based on~\citet{dasilva2023communicationconstrained},
where the robots are divided into several subgroups,
and only the robots in the same subgroup can communicate.
Similar to BA-IND, at least one robot in each subgroup returns to the operator
when the latency constraint is predicted to be exceeded;

(iii) \textbf{HE-FR}, where a fixed robot in the fleet is chosen to return to the operator,
i.e.,
the operator is assigned a fixed neighbor in the communication topology;

(iv) \textbf{HE-NA}, where the local plan adaptation described in
Sec.~\ref{subsubsec:local-plan} is disabled.
Thus, the plans generated during communication are strictly followed.
\begin{figure}[t]
  \centering
  \includegraphics[width=0.9\linewidth]{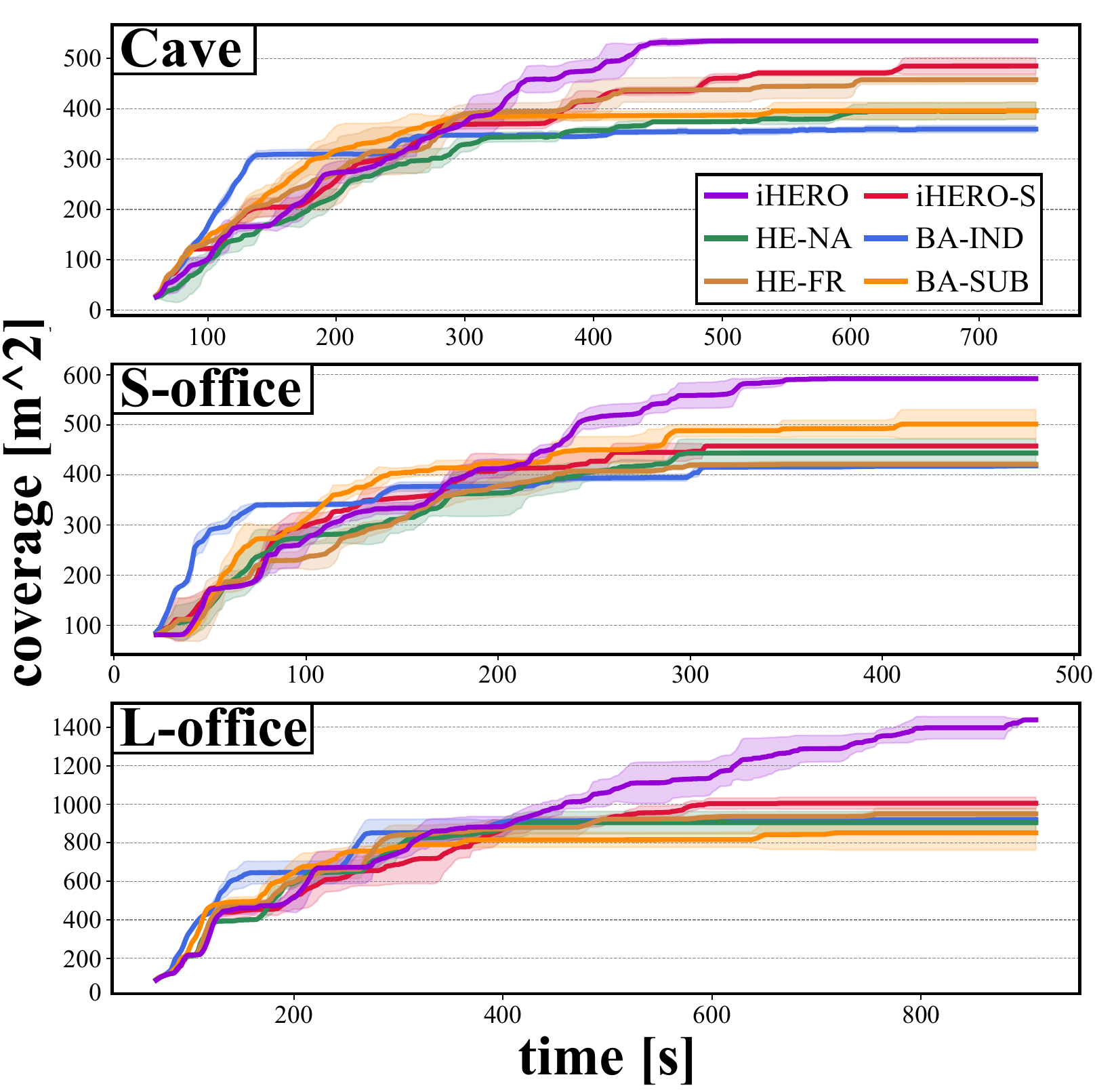}
  \vspace{-2mm}
  \caption{
      Exploration progress of all methods
      under all three scenarios over three runs.
      }
  \label{fig:efficency}
  \vspace{-3mm}
  \end{figure}

Note that the operator has a request to move to the center of the explored area
after the first return event, which is only supported with iHERO.
Thus, an additional baseline is added as \textbf{iHERO-S},
where the operator is \emph{not} allowed to move.
The compared metrics are the covered area,
the number of return events,
the time of last update,
the exploration efficiency (measured as the ratio between explored area and mission time),
and whether online interactions are allowed.

\begin{table}[t!]
    \begin{center}
      \caption{Comparison of baselines across three scenarios.}\label{table:table-data}
      \vspace{-0.05in}
      \setlength{\tabcolsep}{0.3\tabcolsep}
      \centering
      \begin{tabular}{c c c c c c c}
         \toprule[1pt]
        \midrule
        \textbf{Scene} & \textbf{Method} & \textbf{\makecell{Cover. \\Area[\%]}} &
        \textbf{\makecell{Return \\ Events[\#]}}
        & \textbf{\makecell{Last \\ Update[s]}}
        & \textbf{\makecell{Efficien- \\cy[$\mathbf{m^2}$/s]}}
        & \textbf{\makecell{Online \\ Interact.}}\\[.1cm]
        \cmidrule{3-7}
        \multirow{6}{*}{\textbf{Cave}} &
        \textbf{iHERO} & \textbf{100} & \textbf{4.3}  & \textbf{453} & \textbf{0.82}  & \textbf{Yes}\\[.1cm]
        &iHERO-S      & 91.5   & 4.7             & 638             & 0.75         & No\\[.1cm]
        &HE-NA          & 74.8   & 5.7             & 600             & 0.61          & No\\[.1cm]
        &HE-FR          & 86.4    & 6.0            & 604             & 0.70         & No\\[.1cm]
        &BA-IND         & 67.9   & 16.0          & 428             & 0.55          & No\\[.1cm]
        &BA-SUB         & 74.7   & 10.7          & 536             & 0.61          & No \\[.1cm]
        \midrule
        \multirow{6}{*}{\textbf{S-office}} &
        \textbf{iHERO} &  {\textbf{100}} &  {\textbf{3.7}}  &  {\textbf{326}} &  {\textbf{1.69}} &  {\textbf{Yes}}\\[.1cm]
        & {iHERO-S}      &  {77.3}   &  {5.3}             &  {306}             &  {1.31}          &  {No}\\[.1cm]
        & {HE-NA}          &  {74.9}   &  {5.7}             &  {290}             &  {1.27}          &  {No}\\[.1cm]
        & {HE-FR}          &  {71.1}    &  {11.7}            &  {286}             &  {1.20}          &  {No}\\[.1cm]
        & {BA-IND}         &  {70.6}   &  {16.0}          &  {304}             &  {1.19}          &  {No}\\[.1cm]
        & {BA-SUB}         &  {84.7}   &  {10.7}          &  {408}             &  {1.43}          &  {No}\\
        \midrule
        \multirow{6}{*}{\textbf{L-office}} &
        \textbf{iHERO} & \textbf{100} & \textbf{5.3}  & \textbf{907} &  {\textbf{1.58}} & \textbf{Yes}\\[.1cm]
        &iHERO-S      & 69.5   & 5.7             & 594             &  {1.11}          & No\\[.1cm]
        &HE-NA          & 62.5   & 6.3             & 484             &  {0.99}          & No\\[.1cm]
        &HE-FR          & 65.7    & 6.3            & 502             &  {1.05}          & No\\[.1cm]
        &BA-IND         & 63.6   & 16.0          & 355             &  {1.01}          & No\\[.1cm]
        &BA-SUB         & 62.3   & 13.7          & 402             &  {0.94}          & No\\
        \midrule
        \bottomrule[1pt]
      \end{tabular}
    \end{center}
    \vspace{-4mm}
    \end{table}


In total $4$ robots are deployed in the scenarios of large office, small office and the cave,
with $T_\texttt{h}=150s$ in all cases,
for which~$3$ tests are conducted for each method.
As summarized in Table~\ref{table:table-data} and Fig.~\ref{fig:efficency},
the static method {iHERO-S} achieves the highest coverage
and the smallest number of return events,
among all baselines that do not support online interactions.
Fig.~\ref{fig:efficency} shows that although {BA-IND} has a high efficiency initially,
the operator receives almost no new map after~$150s$ as each robot has reached its maximum range
given the latency.
Similar phenomenon holds for {HE-FR} and {BA-SUB} at $400s$,
which however have more return events in total than {iHERO-S}
($6,10.7$ vs. $4.7$ for the cave environment;
$6.3,13.7$ vs. $5.7$ for the large office environment).
However,
choosing a fixed robot to return in HE-FR neglects the ongoing progress of exploration,
while {BA-SUB} suffers from the lack of data flow between the subgroups,
yielding even a lower efficiency.
Moreover,
the necessity of online adaptation is apparent as {HE-NA}
has a much lower efficiency and a higher number of return events,
compared with iHERO-S.
In contrast,
the naive method {BA-IND} has the lowest explored rate
and highest number of return events among all methods,
which validates the importance of inter-robot communication.
Last but not least,
as shown in Table~\ref{table:table-data},
  iHERO is the \textbf{only} method that
  (i) achieves~$100\%$ coverage across all three scenarios;
  (ii) requires the least number of return events than all baselines;
  (iii) has the highest efficiency over all baselines across all scenarios;
  (iv) supports online interactions such as~$Q_0,Q_1,Q_2$ requests.
  This is also apparent by comparing the time of last update, i.e.,
  the static methods can not generate new map updates after a short period for all scenarios.
  For instance, the method {BA-IND} can not explore new area after~$355s$ in the scenario
  of large office, similarly for BA-SUB and HE-NA.
  Consequently, iHERO has a significant increase in efficiency:
  almost~$56\%$ compared with BA-IND and~$68\%$ with BA-SUB.
  This signifies the importance of dynamic interaction between the operator and the robotic fleet
  during collaborative exploration.

\begin{figure}[t!]
  \centering
  \begin{subfigure}[b]{0.45\textwidth}
      \includegraphics[width=\textwidth]{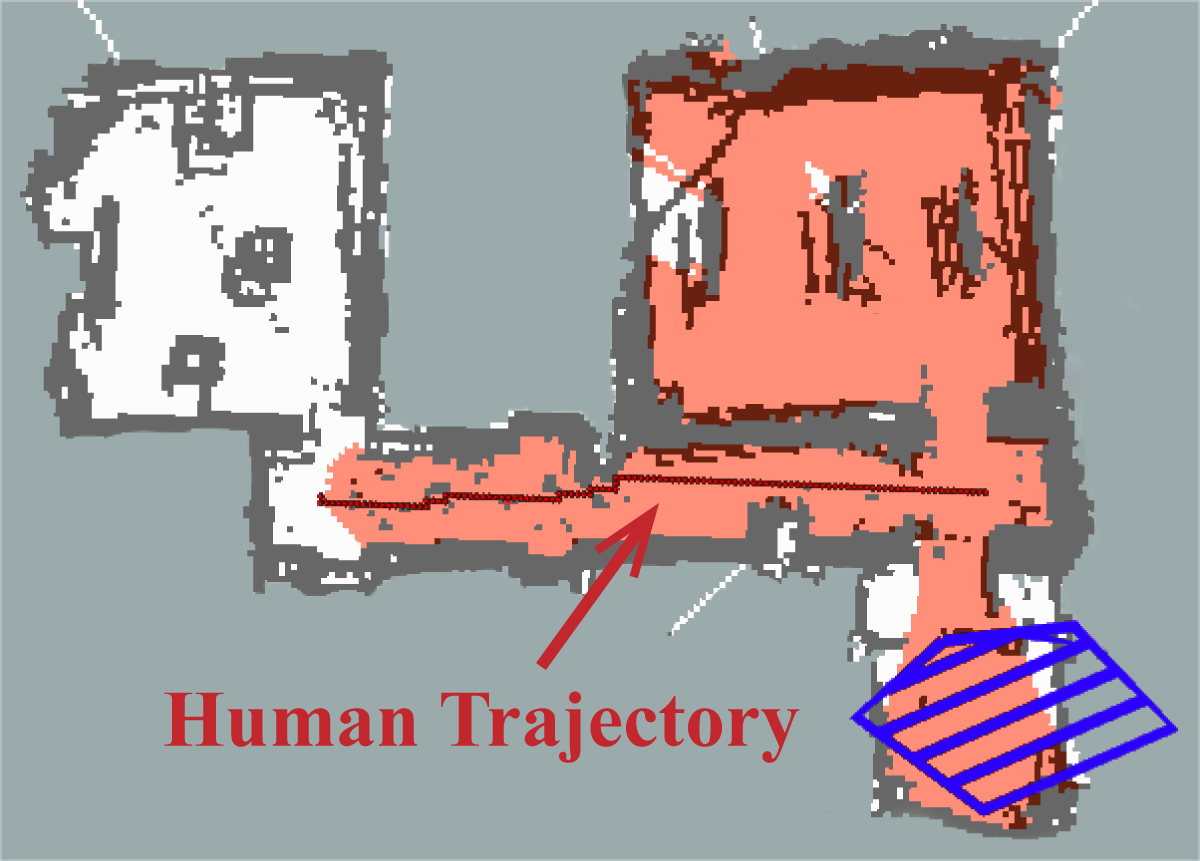}
      \label{fig:real_interact}
  \end{subfigure}
  \vspace{-3mm}

  \begin{subfigure}[b]{0.45\textwidth}
      \includegraphics[width=\textwidth]{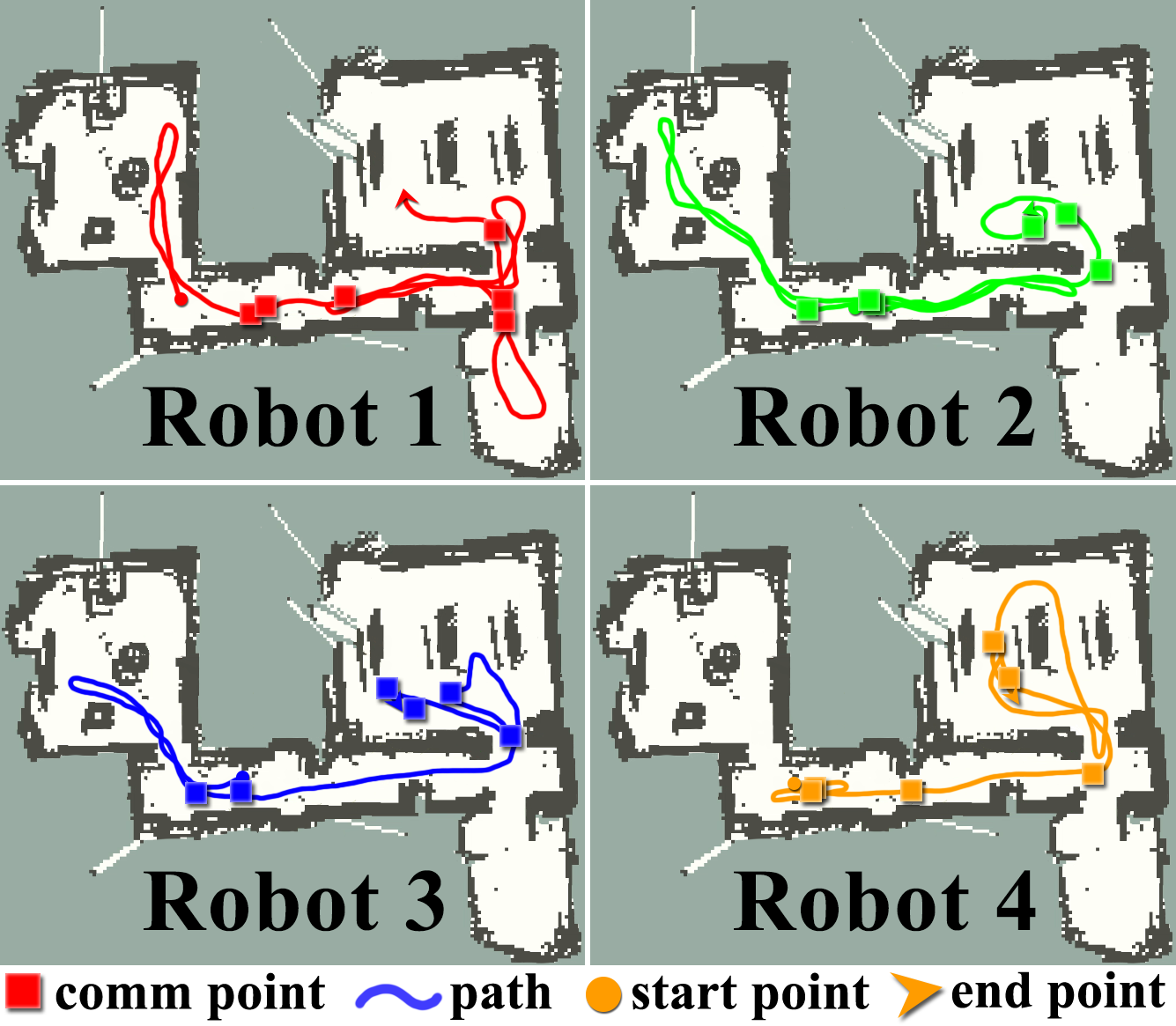}
      \label{fig:real_traj}
  \end{subfigure}
  \vspace{-2mm}
  \caption{Results from the hardware experiment.
  \textbf{Top}: the final map obtained by operator,
  including the prioritized region as $Q_1$ requests (blue polygon),
  and the dynamic movement of the operator as $Q_2$ requests to the right end,
  within the allowed region (in red);
  \textbf{Bottom}: the robot trajectories and communication points (colored squares).}
  \label{fig:real_exp}
  \vspace{-4mm}
  \end{figure}

\subsection{Hardware Experiments}\label{sec:hard-experiments}
\subsubsection{System setup}\label{subsec:exp-setup}
As shown in Fig.~\ref{fig:overall},
the hardware experiment deploys four differential-driven ``LIMO'' ground robots
to explore an office environment.
Each robot is equipped with an EAI XL2 LiDAR of range~$10m$ and a NVIDIA Jetson Nano,
which are responsible for running SLAM and navigation algorithms.
Inter-robot communications follow the LOS constraints during runtime.
The visualization and logging is on a laptop with Intel Core i7-1280P CPU,
while the operator interacts with the robotic fleet through an Android tablet.
The tested environment is of size $25m\times 20m$,
composed of three rooms connected to a long corridor.
The operator is an \emph{expert} in robotics and familiar with
the application of multi-robot collaborative exploration.
A brief tutorial was offered before the mission, regarding
what these requests are and how they can be specified on the tablet
via a GUI as shown in Fig.~\ref{fig:overall}.
The operator is initially located at the start of corridor,
without direct sight to the whole environment.
The maximum speed of all robots is set to~$0.3m/s$,
and the latency~$T_\texttt{h}=120s$.

\begin{table}[t!]
    \begin{center}
      \caption{Comparison of baselines in hardware experiments.}\label{table:exp-data}
      \vspace{-0.05in}
      \setlength{\tabcolsep}{0.3\tabcolsep}
      \centering
      \begin{tabular}{c c c c c c}
         \toprule[1pt]
        \midrule
        \textbf{ {Method}} & \textbf{\makecell{ {Cover.} \\ {Area[\%]}}} &
        \textbf{\makecell{ {Return} \\  {Events[\#]}}}
        & \textbf{\makecell{ {Last} \\  {Update[s]}}}
        & \textbf{\makecell{ {Efficien-} \\ {cy[$\mathbf{m^2}$/s]}}}
        & \textbf{\makecell{ {Online} \\  {Interact.}}}\\[.1cm]
        \cmidrule{2-6}
        \textbf{ {iHERO}} & \textbf{ {100}} & \textbf{ {3}}  & \textbf{ {484}} &  { {\textbf{0.40}}}  & \textbf{ {Yes}}\\[.1cm]
         {iHERO-S}      &  {93.5}   &  {4}             &  {518}             &  {0.37}         &  {No}\\[.1cm]
         {HE-NA}          &  {88.1}   &  {6}             &  {482}             & {0.35}          &  {No}\\[.1cm]
         {HE-FR}          &  {91.6}    &  {4}            &  {581}             &  {0.37}         &  {No}\\[.1cm]
         {BA-IND}         &  {85.2}   &  {10}          &  {440}             &  {0.34}          &  {No}\\[.1cm]
         {BA-SUB}         &  {92.3}   &  {8}          &  {536}             &  {0.37}          &  {No} \\[.1cm]
        \midrule
        \bottomrule[1pt]
      \end{tabular}
    \end{center}
    \vspace{-4mm}
    \end{table}

\subsubsection{Results}\label{subsec:exp-results}
Snapshots during hardware experiments are shown in Fig.~\ref{fig:overall},
where the robots switch among different modes: exploration, intermittent communication
and return to the operator.
It is interesting to observe that different from simulations
where a global view is available in the simulator,
the operator can only obtain updates of the fleet status via the return events,
e.g., which robot has explored which part and their current status.
The final complete map and the robot trajectories are shown
in Fig.~\ref{fig:real_exp}.
The latency constraint is satisfied at all time with the maximum latency
being~$102s$.
Two different requests are handled during the exploration process:
(i) Since the only remaining unknown parts are in the right area at $313s$,
the operator sends  a $Q_1$ request to prioritize the bottom-right room first
(as shown in the blue polygon).
Consequently, the robots react and the bottom-right room is fully explored at $433s$;
(ii) the operator requests to move to the end of corridor at $440s$,
for which the computed feasible region is shown in red.
Then, the operator moves to the closest point in the feasible region
to the desired location.
Afterwards, the complete map is obtained at $484s$.
It is worth emphasizing that due to actuation uncertainty
and collision avoidance during navigation,
the predicted arrival time of the robots when optimizing local plans
is not always accurate.
Thus, both the proposed intermittent communication and the online adaptation
are essential to prevent the propagation of delayed arrival in most cases,
thus ensuring the exploration efficiency and the latency constraint.
Similar baselines are also tested in the hardware experiments,
with results shown in Table~\ref{table:exp-data}.
They show that the
proposed method has the potential to be deployed to even larger fleets
for real-world applications outside the lab.
Detailed videos of the hardware experiments can
be found in the supplementary material.

\section{Conclusion} \label{sec:conclusion}
  This work tackles the practical issues that arise during the deployment
  of multi-robot system for collaborative exploration, i.e.,
  (i) the communication within the robotic fleet and between the fleet and the operator
  is restricted to close-range local communications;
  (ii) the operator requires a timely update of the exploration process online;
  (iii) the operator may specify prioritized regions to explore,
  or move dynamically within the explored area.
  An interactive human-oriented online coordination framework for
  collaborative exploration and supervision under scarce communication (iHERO) has been proposed.
  It builds upon the distributed protocol of intermittent communication,
  and accommodates explicitly these issues above as online requests~$Q_{0,1,2}$ from the operator.
  It has been proven theoretically that the latency constraints in~$Q_0$ are ensured at all time,
  while the $Q_{1,2}$ requests are satisfied online.
  Moreover, extensive numerical simulations are conducted across various scenarios with different
  robotic fleets,  which validate that the proposed framework is
  the only method that supports online interaction with the operator,
  and achieves the highest performance w.r.t. the maximum area covered, the least number
  of return events, the overall exploration efficiency.
  Last but not least, a human-in-the-loop hardware demonstration is conducted
  where requests are specified online via the provided graphical interface,
  which further shows its potential applicability to real-world scenarios.

  (i) As mentioned in Remark~\ref{remark:merge},
  the practical concerns of unaligned coordinate system
  and uncertainties during SLAM and map merging are not addressed
  in this work, which can be explored in future work;
  (ii) As discussed in Sec.~\ref{subsec:topology}, the communication topology~$\mathcal{G}$
  can be an arbitrary connected graph and set to a ring topology by default.
  However, as shown in the numerical experiments,
  such topology can have a significant impact on the exploration progress,
  which is hard to decide beforehand as the environment is unknown.
  Thus, an adaptive algorithm that changes the communication topology online
  according to the topological property of the explored area would be preferred,
  which is part of our ongoing work;
  (iii) Besides exploration, the robotic fleets often need to perform other tasks
  to interact with environment, such as inspection and manipulation,
  which however requires \emph{consistent} and \emph{high-bandwidth} communication
  (rather than intermittent).
  In this case, the other robots may need to temporarily stop exploration and serve
  as relays to form a chain of reliable communication.
  This leads to a combinatorial scheduling and assignment problem for unknown environments,
  which is part of our ongoing work;
  (iv) Lastly, although the method is proposed for general communication models,
  the numerical results are obtained under the simple model of LOS with a bounded range.
  Future work would involve more practical models depending on the
  hardware platform and applications.

\section*{Acknowledgement} \label{sec:acknowledgement}
This work was supported by the National Key Reasearch and Development Program of China under grant 2023YFB4706500; 
the National Natural Science Foundation of China (NSFC) under grants 62203017, 
U2241214, T2121002; 
and the Fundamental Reasearch Funds for the central universities.

\bibliographystyle{plainnat}
\bibliography{contents/references}

\section*{Appendix}\label{sec:app}

\subsection{Proof of Lemma~\ref{prop:replace-Ts}}
\label{subsec:proof-replace-Ts}
\begin{proof}
To begin with,
when~$k=0$, $\chi^{\texttt{h}}_0=0>t_0-T_{\texttt{h}}=-T_{\texttt{h}}$;
when~$k\geq 1$,
$\chi^{\texttt{h}}_k>\chi^{\texttt{h}}_{k-1}\geq (t_k-T_{\texttt{h}})$.
Thus, $\chi^{\texttt{h}}_k>(t_k-T_{\texttt{h}})$ holds, $\forall k\geq 0$.
It implies that the constraint~\eqref{eq:delta-r} is fulfilled
when $t\in \{t_k,k\geq 0\}$.
Second, due to the non-decreasing monotonicity of $t^{\texttt{h}}_n$,
it holds that $t-t^\texttt{h}_n(t) \leq t-t^\texttt{h}_n(t_k)$.
Thus,
the inequality
$t-t^\texttt{h}_n(t_k) \leq t-\chi^{\texttt{h}}_k
\leq t-(t_{k+1}-T_{\texttt{h}}) \leq T_{\texttt{h}}$ holds,
yielding that the constraint~\eqref{eq:delta-r} is fulfilled,
$\forall t\in[t_k,t_{k+1}]$.
This completes the proof.
\end{proof}

\subsection{Proof of Lemma~\ref{prop:exist-solution}}
\label{subsec:proof-exist-solution}
\begin{proof}
  To begin with,
if the while-loop in Lines~\ref{alg_line:continue_itr}-\ref{alg_line:remove_frontier} 
of Alg.~\ref{alg:opt-com} terminates
before~$\widetilde{\mathcal{F}}_{ij}$ becomes empty,
the resulting solution is clearly feasible.
If~$\widetilde{\mathcal{F}}_{ij}$ becomes empty,
consider two cases:
(i) If a return event is not required,
event~$c^\star_{ij}$ derived in Line~\ref{alg_line:gen_comm} is clearly a feasible solution;
(ii) If a return event is required
and robot~$j$ returns to the operator,
it follows that:
\begin{equation}\label{eq:exist-proof}
  \begin{split}
    &\mathop{\mathbf{max}}\limits_{\ell=i,j}\Big \{t^{K_\ell}_\ell
    +T_{\texttt{nav}}(p^{K_\ell}_\ell,\, p^\star_{ij})\Big\}
  +T_{\texttt{nav}}(p^\star_{ij},\, p_\texttt{h})\\
  =&\mathop{\mathbf{max}}\Big \{ t^{K_i}_i+
  T_{\texttt{nav}}(p^{K_i}_i,p_\texttt{h}),\, t^{K_j}_j \Big\}\\
    \leq&\mathop{\mathbf{max}}\Big \{ T^{K_i}_{\texttt{lim}},\, T^{K_j}_{\texttt{lim}}\Big\}
          \leq T_{\texttt{lim}},
  \end{split}
\end{equation}
where~$p^\star_{ij}$ is derived in~\eqref{eq:select-p};
$p^{K_j}_j=p_{\texttt{h}}$ as robot~$j$ returns to the operator;
$t^{K_j}_j$ is the estimated arrival time of robot~$j$ at~$p_h$;
$T^{K_i}_{\texttt{lim}}$ is the upper-bound for choosing~$c^{K_i}_i$
(the same for~$j$);
and~$T_{\texttt{lim}}$ is the updated time limit after
taking into account the return event of robot~$j$, see Line~\ref{alg_line:update_lim}.
The first equality of~\eqref{eq:exist-proof} stems from the fact that~$p^\star_{ij}$ belongs
to the generated path from~$p^{K_i}_i$ to~$p_{\texttt{h}}$,
while the first inequality is ensured by Alg.~\ref{alg:opt-com}
for the previous meeting events.
Moreover, the last inequality follows from the non-decreasing
monotonicity of~$T_{\texttt{lim}}$,
which can be inferred from~\eqref{eq:estimate-Th},~\eqref{eq:update-omega-h-ij}
and~\eqref{eq:update-omega^l_i}. 
Note that inequality~\ref{eq:exist-proof} still holds true for the first round, 
when there is no previous meeting event for robot $i$ or $j$ to ensure its first inequality. 
This is due to the fact that initially the robots are within the range of communication 
with operator, 
which means $t^{K_i}_i+T_{\texttt{nav}}(p^{K_i}_i,p_\texttt{h})=0$ 
if $i$ has no previous meeting events, 
and $t^{K_j}_j=0$ for the case of $j$. 
Thus, Alg.~\ref{alg:opt-com} always returns a feasible solution
for all meeting events.
\end{proof}

\subsection{Proof of Theorem~\ref{theo:Ts-constraint}}
\label{subsec:proof-theo-Ts-constraint}
\begin{proof}
Lemma~\ref{prop:exist-solution} has ensured that
the coordination during each pairwise communication is
guaranteed to return a feasible solution.
Thus, it remains to show that 
the two conditions of Proposition~\ref{prop:replace-Ts} are satisfied.
Regarding the first condition,
Alg.~\ref{alg:opt-com} ensures that all communication events
satisfy the constraint~\eqref{eq:meet-constraint},
meaning that the time of the next return event is bounded by
$t_{r+1}\leq \mathbf{min}_n\{\Omega^{\texttt{h}}_{ij}[n]\}+T_{\texttt{h}}
\leq \chi^{\texttt{h}}_r+T_{\texttt{h}}$.
The second condition is ensured by the cyclic communication topology
and the pairwise coordination process described above.
More specifically,
assume that robot~$i$ returns to the operator
before communicating with its succeeding robot $j$ in~$\mathcal{G}_{\texttt{S}}$
during round~$r$.
Due to the cyclic communication topology,
$\mathbf{argmin}_n\{T^\texttt{h}_r[n]\}=j$.
Considering the strategy of choosing the robot to return as described before,
the robot that returns to base in round $r+1$ is
either $j$ or any other succeeding robots of $j$,
which means that $T^\texttt{h}_{r+1}$ should contain the
information of robot $i$ and $j$'s meeting event.
Without loss of generality,
assume that robot $j$'s succeeding robot is $k$,
then it holds that
$\mathbf{min}_n\{T^\texttt{h}_{r+1}[n]\}\geq t_{ij}
> t^-_{jk}=\mathbf{min}_n\{T^\texttt{h}_{r}[n]\}$,
where $t^-_{jk}$ is the meeting time of
communication event between robot $j$ and $k$ that happens
before robot $i$ returns, 
and $t_{ij}$ is the meeting time of $i$ and $j$ after $i$ returns, 
implying that the second condition is satisfied.
And if the adopted topology is not a ring topology, 
consider a robot $i$ communicating with robot $j$ in $c_{ij}$, 
to decide their next event $c'_{ij}$. 
If $i$ returns to the operator at $t_i$, 
then equation~\eqref{eq:meet-constraint} implies that $t_i\leq t^{\texttt{h}}_i(t_i)+T_\texttt{h}$;  
while if $i$ doesn't return, 
the strategy of determining return events ensures that $t-t^{\texttt{h}}_i(t)< T_\texttt{h}, \forall t\in [t_{ij}, t'_{ij}]$, 
where $t_{ij}$ is the meeting time of $i$ and $j$ in $c_{ij}$,
and $t'_{ij}$ is the meeting time of $i$ and $j$ in $c'_{ij}$. 
Thus, 
time constraint is satisfied locally for each robot $i$,
and hence the global constraint is satisfied for the whole fleet.  
This completes the proof.
\end{proof}

\end{document}